\theoremstyle{definition}
\newtheorem{definition}{Definition}
\newtheorem{example}{Example}
\def\actions{\mathcal{A}}
\def\histories{\mathcal{H}}
\def\E{\mathbb{E}}
\def\H{\mathbb{H}}
\def\I{\mathbb{I}}
\def\Pr{\mathbb{P}}
\def\1{\mathbf{1}}
\DeclareMathOperator*{\argmax}{arg\,max}
\newcommand\numberthis{\addtocounter{equation}{1}\tag{\theequation}}
\newcommand{\kibitz}[2]{\ifnum\Comments=1{\textcolor{#1}{\textsf{\footnotesize #2}}}\fi}
\definecolor{darkred}{rgb}{0.7,0,0}
\definecolor{darkgreen}{rgb}{0.0,0.5,0.0}
\definecolor{darkblue}{rgb}{0.0,0.0,0.5}
\definecolor{teal}{rgb}{0.0,0.5,0.5}
\title{Non-Stationary Contextual Bandit Learning via Neural Predictive Ensemble Sampling}
\author{
  Zheqing Zhu \\
  Meta AI, Stanford University\\
  Menlo Park, CA \\
  \texttt{billzhu@meta.com} \\
  \And
  Yueyang Liu \\
  Stanford University\\
  Stanford, CA \\
  \texttt{yueyl@stanford.edu} \\
  \And
  Xu Kuang \\
  Stanford University\\
  Stanford, CA\\
  \texttt{kuangxu@stanford.edu} \\
  \And
  Benjamin Van Roy \\
  Stanford University\\
  Stanford, CA\\
  \texttt{bvr@stanford.edu} \\
}
\begin{document}
\maketitle
    
\begin{abstract}
Real-world applications of contextual bandits often exhibit non-stationarity due to seasonality, serendipity, and evolving social trends. 
While a number of non-stationary contextual bandit learning algorithms have been proposed in the literature, they excessively explore due to a lack of prioritization for information of enduring value, or are designed in ways that do not scale in modern applications with high-dimensional user-specific features and large action set, or both. In this paper, we introduce a novel non-stationary contextual bandit algorithm that addresses these concerns. It combines a scalable, deep-neural-network-based architecture with a carefully designed exploration mechanism that strategically prioritizes collecting information with the most lasting value in a non-stationary environment. Through empirical evaluations on two real-world recommendation datasets, which exhibit pronounced non-stationarity, we demonstrate that our approach significantly outperforms the state-of-the-art baselines. 
\end{abstract}

\section{Introduction}
\begin{wrapfigure}{r}{0.4\textwidth}
  \begin{center}
    \includegraphics[width=0.4\textwidth]{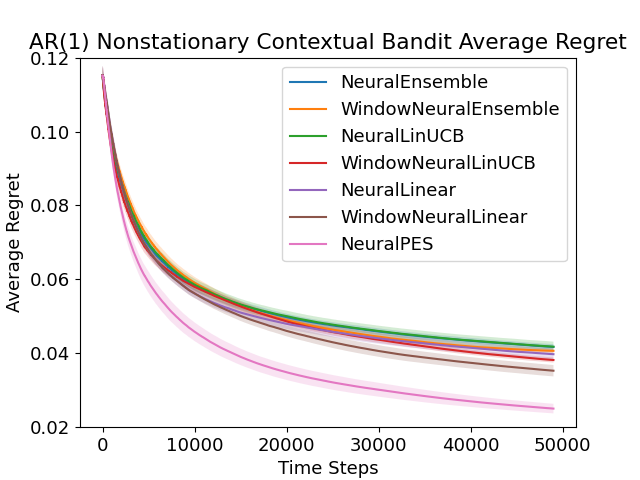}
  \end{center}
  \caption{NeuralPES Regret in Nonstationary Contextual Bandits}
  \label{fig:spoiler}
\end{wrapfigure}
Contextual bandit learning algorithms have seen rapid adoptions in recent years in a numder of domains \citep{bouneffouf2019survey}, from driving personalized recommendations \citep{li2010contextual} to optimizing dyanmic advertising placements \citep{schwartz2017customer}. The primary objective of these algorithms is to strategically select actions to acquire information about the environment in the most cost-effective manner, and use that knowledge to guide subsequent decision-making.  
Thanks in part to the historical development in this field, many of these algorithms are designed for a finite-horizon experiment with the environment remaining relatively stationary throughout. 

However, real-world environments are rife with non-stationarity \citep{ditzler2015learning, elena2021survey}, as a result of seasonality \citep{keerthika2020enhanced, hwangbo2018recommendation}, serendipity \citep{kotkov2016survey, kotkov2018investigating}, or evolving social trends \citep{abdollahpouri2019managing, canamares2018should}. To make matters worse, many practical contextual bandit systems, such as these commonly used in a recommendation engine, operate  in a continuous manner over a long, or even indefinite time horizon, further exposing the learning algorithm to non-stationarity that is bound to manifest over its lifetime. Indeed, when applied to non-stationary environments, traditional contextual bandit learning algorithms designed with stationarity in mind are known to yield sub-optimal performance \citep{trovo2020sliding, russac2020algorithms}.

The goal of this paper is to study the design of contextual bandit algorithms that not only successfully navigate a non-stationary environment, but also scale to real-world production environments. Extending classic bandit algorithms to a non-stationary setting has received sustained attention in recent years \citep{kocsis2006discounted, garivier2008upper, raj2017taming, trovo2020sliding}. A limitation in these existing approaches, however, is that their primary exploration mechanisms still resemble the stationary version of the algorithm, and non-stationarity is only taken into account by discounting the importance of past observations, which often leads to excessive exploration. As pointed out by \cite{liu2023nonstationary}, exploration designs intended for stationary environments tend to focus on resolving the uncertainty surrounding an action's current quality, and as such, suffer sub-optimal performance for failing to prioritize collecting information that would be of more enduring value in a non-stationary environment. In response, \cite{liu2023nonstationary} proposed the predictive sampling algorithm that takes information durability into account, and demonstrated an impressive performance improvement over existing solutions. However, the predictive sampling algorithm, among many nonstationary contextual bandit learning algorithm we discuss in the related work section, suffers from their scalability and does not scale with modern deep learning systems. 

 
In this work, we take a step towards solving large-scale nonstationary contextual bandit problems by introducing Neural Predictive Ensemble Sampling (NeuralPES), the first non-stationary contextual bandit learning algorithm that is scalable with modern neural networks and effectively explores in a non-stationary envrionment by seeking lasting information.  
Theoretically, we establish that NeuralPES emphasizes the acquisition of lasting information, information that remains relevant for a longer period of time. 
Empirically, we validate the algorithm's efficacy in two real-world recommendation datasets, spanning across $1$ week and $2$ months of time, respectively, and exhibiting pronounced non-stationarity. Our findings reveal that our algorithm surpasses other state-of-the-art neural contextual bandit learning algorithms, encompassing both stationary and non-stationary variants. As a spoiler for our empirically results, see Figure \ref{fig:spoiler} for the average regret of our agent compared to other baselines on an AR(1) nonstationary contextual bandit environment.

\section{Related Work}
\textbf{Non-Stationary Bandit Learning.} 
A large number of non-stationary bandit learning algorithms rely on heuristic approaches to reduce the effect of past data. These heuristics include maintaining a sliding window \cite{cheung2019learning, cheung2022hedging, garivier2008upper, russac2020algorithms, srivastava2014surveillance, trovo2020sliding}, directly discounting the weight of past rewards by recency \cite{bogunovic2016time, garivier2008upper, russac2020algorithms, kocsis2006discounted}, restarting the algorithm periodically or with a fixed probability at each time \cite{auer2019achieving, allesiardo2017non, besbes2019optimal, bogunovic2016time, wei2016tracking, zhao2020simple}, restarting upon detecting a change point \cite{abbasi2022new, allesiardo2015exp3, auer2019adaptively, allesiardo2017non, besson2019generalized, cao2019nearly, chen2019new, 9194367, ghatak2021kolmogorov, hartland2006multi, liu2018change, luo2018efficient, pmlr-v31-mellor13a}, and more complex heuristics \citep{gupta2011thompson, kim2020randomized, raj2017taming, viappiani2013thompson}. These algorithms adapt stationary bandit learning algorithms like Thompson sampling (TS) \citep{Thompson1933}, Upper Confidence Bound (UCB) \citep{lai1985asymptotically}, and exponential-weight algorithms (Rexp3) \citep{auer2002nonstochastic, freund1997decision} using aforementioned heuristics to reduce the impact of past data and encourage continual exploration. However, they often lack intelligent mechanisms for seeking lasting information during exploration. 
While predictive sampling \citep{liu2023nonstationary} seeks for lasting information, it does not efficiently scale. 

\textbf{Deep Neural Network-Based Bandit Algorithms.}
In practical applications of bandit learning, both the set of contexts and the set of actions can be large. A number of algorithms \citep{gu2021batched, jia2022learning, kassraie2022neural, riquelme2018deep, salgia2023provably, su2023value, xu2022neural, zhang2020neural, zhou2020neural, zhu2023scalable} utilize the capacity of deep neural networks to generalize across actions and contexts. These algorithms are designed for stationary environments. While \cite{allesiardo2014neural} proposes a deep neural-network based algorithm for non-stationary environments, it does not intelligently seek for lasting information. 
\section{Contextual Bandits}
This section formally introduces contextual bandits, and other related concepts and definitions. We first introduce contextual bandits. 

\begin{definition} [\bf{Contextual Bandit}]
A contextual bandit $\mathcal{E}$ with a finite set of contexts $\mathcal{C}$ and a finite set of actions $\mathcal{A}$ 
is characterized by three stochastic processes: the reward process $\{R_t\}_{t \in \mathbb{N}}$ with state space $\mathbb{R}^{|\mathcal{C}|} \times \mathbb{R}^{|\actions|}$, the contexts $\{C_t\}_{t \in \mathbb{N}}$ with state space $\mathcal{C}$, and the sequence of available action sets $\{\mathcal{A}_t\}_{t \in \mathbb{N}}$ with state space $2^{\mathcal{A}}$. We use $\mathcal{E} = (\{R_t\}_{t \in \mathbb{N}}, \{C_t\}_{t \in \mathbb{N}}, \{\mathcal{A}_t\}_{t \in \mathbb{N}})$ to denote the bandit. 
\label{definition:contextual_bandits}
\end{definition}
At each timestep $t \in \mathbb{N}$, an agent is presented with 
context $C_t$ and 
the set of available actions $\mathcal{A}_t$. 
Upon selecting action $a \in \mathcal{A}_t$, the agent observes a reward of $R_{t+1, C_t, a}$. 

\subsection{Linear Contextual Bandits}
In many practical applications, both the context set and the action set are large. 
To enable effective generalization across these sets, certain structural assumptions on how the rewards are generated come into play. In this regard, the reward 
$R_{t, c, a}$ 
can be described as a function of a feature vector $\phi(c,a)$, which captures relevant contextual information in context $c \in \mathcal{C}$ and action information in action $a \in \mathcal{A}$. To exemplify this structure, let us introduce the linear contextual bandit. 

\begin{example} [\bf{Linear Contextual Bandit}]
A linear contextual bandit is a contextual bandit with feature mapping $\phi: \mathcal{C} \times \actions \rightarrow \mathbb{R}^{d}$, a stochastic process 
$\{\theta_t\}_{t \in \mathbb{N}}$ with state space $\mathbb{R}^d$. 
For all $t \in \mathbb{N}$, $c \in \mathcal{C}$, and $a \in \actions_t$, the reward $R_{t, c, a}$ satisfies that 
$\mathbb{E}[R_{t, c, a} | \phi, \theta_t] = \phi(c, a)^{\top} \theta_t$. 
\label{example:linear_contextual_bandit}
\end{example}

\subsection{Policy and Performance}
Let $\histories$ denote the set of all sequences of a finite number of action-observation pairs. Specifically, the observation at timestep $0$ consists of only the initial context and available action set, and each following observation consists of a reward, a context, and an available action set. 
We refer to the elements of $\histories$ as \emph{histories}. We next introduce a policy. 
\begin{definition} 
A policy $\pi: \histories \rightarrow \mathcal{P}(\actions)$ is a function that maps each history in $\mathcal{H}$ to a probability distribution over the action set $\actions$.
\end{definition}

A policy $\pi$ assigns, for each realization of history $ h \in \mathcal{H}$, a probability $\pi(a|h)$ of choosing an action $a$ for all $a \in \actions$. 
We require that $\pi(a | h) = 0$ for $a \notin \mathcal{A}_t$, where $\mathcal{A}_t$ is the available action set defined by $h$. 
For any policy $\pi$, 
we use $A_t^{\pi}$ to denote the action selected at time $t$ by an agent that executes policy $\pi$, and  
$H^{\pi}_{t}$ to denote the history generated at timestep $t$ as an agent executes policy $\pi$. 
Specifically, we let $H^{\pi}_0$ be the empty history. We let $A_t^{\pi}$ be such that $\Pr(A^{\pi}_t \in \cdot |H^{\pi}_t) =  \pi(\cdot|H^{\pi}_t)$ and that $A_t^{\pi}$ is independent of $\{C_t\}_{t \in \mathbb{N}}$, $\{R_t\}_{t \in \mathbb{N}}$, and $\{\mathcal{A}_t\}_{t \in \mathbb{N}}$ 
conditioned on $H_t^{\pi}$, and let $H^{\pi}_{t+1} = (C_0, \mathcal{A}_0, A^{\pi}_0, R_{1, C_0, A^{\pi}_0}, \ldots, A^{\pi}_{t}, R_{t+1, C_t, A^{\pi}_{t}}, C_{t+1}, \mathcal{A}_{t+1})$. 

For all policies $\pi$, all 
bandits $\mathcal{E} = (\{R_t\}_{t \in \mathbb{N}}, \{C_t\}_{t \in \mathbb{N}}, \{\mathcal{A}_t\}_{t \in \mathbb{N}})$, and $T \in \mathbb{N}$, the expected cumulative reward and the long-run average expected reward are 
\begin{align*}
\mathrm{Return}(\mathcal{E}; T; \pi) = 
\sum_{t = 0}^{T-1}\mathbb{E}\left[R_{t+1, C_t, A_t^{\pi}}\right]; 
\overline{\mathrm{Return}}(\mathcal{E}; \pi) = \limsup_{T \rightarrow +\infty} 
\frac{1}{T} \mathrm{Return}(\mathcal{E}; T; \pi). 
\end{align*}
The average expected reward is particularly useful in evaluating agent performance when both the reward process $\{R_t\}_{t \in \mathbb{N}}$ and the context process $\{C_t\}_{t \in \mathbb{N}}$ are stationary stochastic processes. In such cases,  
$
\overline{\mathrm{Return}}(\mathcal{E}; \pi) = \mathbb{E}\left[R_{t+1, C_t, A_t^{\pi}}\right], 
$ 
which is independent of $t$. 

\section{Neural Predictive Ensemble Sampling}
In this section, we introduce a novel algorithm for non-stationary contextual bandit learning. The algorithm has several salient features below. See visualization of the architecture in Fig. \ref{fig:v_predictive_ensemble} 



\textbf{Use Deep Neural Network Ensemble as Uncertainty Representation for Exploration.} In contextual bandit learning, an agent should intelligently balance exploration and exploitation. Thompson sampling (TS) \citep{Thompson1933} stands as one of the most popular bandit learning algorithms, backed by well-established theoretical guarantees \citep{agrawal2012analysis, russo2014learning} and good empirical performance \citep{chapelle2011empirical, zhu2023scalable}. To adopt TS in complex settings, Ensemble sampling \citep{lu2017ensemble} is introduced an efficient approximation and is also compatible with deep neural networks. 
Importantly, ensemble sampling has shown both theoretical effectiveness and superior empirical performance with neural networks \citep{lu2018efficient, qin2022analysis, osband2016deep}. Therefore, we adopt a deep ensemble architecture. 


\textbf{Seek Out Lasting Information.} In a non-stationary environment, a continuous stream of new information emerges. As an agent strives to balance between exploration and exploitation, an important consideration involves prioritizing the acquisition of information that remains relevant for a longer period of time \citep{liu2023nonstationary}. We introduce an algorithm that effectively prioritizes seeking such lasting information. Notably, our algorithm, NeuralPES, avoids the introduction of assumptions on how the rewards are generated or that of additional tuning parameters to adjust the extent of exploration. Indeed, it determines the exploration extent by training a deep neural network. To our knowledge, NeuralPES is the first algorithm that both suitably prioritizes seeking lasting information and scales to complex environments of practical interest. 
\begin{figure}[!ht]
    \centering
    \includegraphics[width=0.9\linewidth]{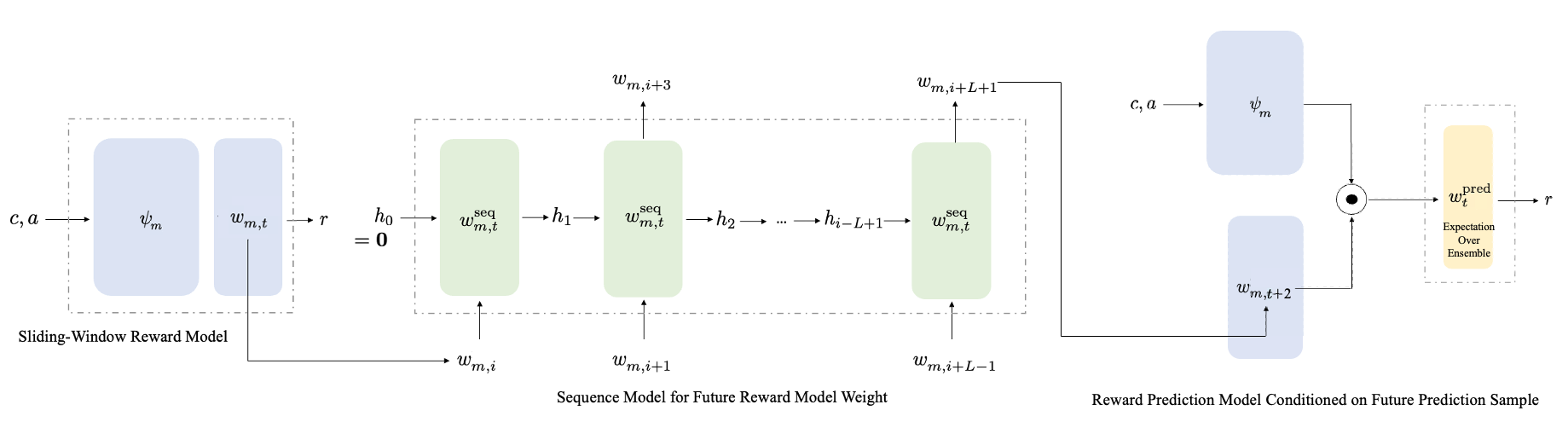}
    \caption{Visualization of three components of NeuralPES: reward, sequence, and predictive model.}
    \label{fig:v_predictive_ensemble}
\end{figure}
\subsection{Neural Ensemble Sampling}
Before delving into the specific design of our algorithm, let us introduce a baseline algorithm which can be thought of as a deep neural network-based TS. This algorithm is referred to as the Neural Ensemble Sampling (NeuralEnsembleSampling). 

At each timestep $t \in \mathbb{N}$, a NeuralEnsembleSampling agent (See Algorithm \ref{alg:ensemble-sampling}):
\begin{enumerate}
\item Trains an ensemble of $M$ reward models, updating weights using stochastic gradient descent.  
\item Samples $m \sim \mathrm{unif}(\{1, ... , M\})$, and uses the $m$-th reward model to predict a reward at the next timestep $\hat{R}_{t+1, C_t, a}$. 
\item Selects an action that maximizes $\hat{R}_{t+1, C_t, a}$. 
\end{enumerate}

\begin{figure}[ht]
\centering
\begin{minipage}{0.99\textwidth}
\begin{algorithm}[H]
\caption{NeuralEnsembleSampling}\label{alg:ensemble-sampling}
\textbf{Input:} Horizon $T$, number of particles in each ensemble $M$, 
loss function $\mathcal{L}$, 
replay buffer size $K$, sequence model input size $L$, number of gradient steps $\tau, \tau_{\mathrm{seq}}$, step sizes $\alpha, \alpha_{\mathrm{seq}}$, minibatch sizes $K^{\prime}$,
\\
\textbf{Initialize:} Let replay buffer $\mathcal{B} = \emptyset$, 
and randomly initialize weights $\psi_{1:M}$, $w_{1:M, 0}$, and $w^{\mathrm{seq}}_{1:M, 0}$, \\
\For{$t = 0, 1, \ldots, T-1$}{
\For{$m = 1, 2, \ldots, M$} {
Let $(\psi_m, w_{m, t}) \leftarrow \mathrm{TrainRewardNN}(\mathcal{B}, \mathcal{L}, \psi_m, w_{m, t-1}, \tau, \alpha, K^{\prime})$ \\
\textbf{sample}: $m \sim \mathrm{unif}(\{1, ... , M\})$ \\
\textbf{select}: 
$A_t \in \argmax_{a \in \actions_t} f ({w_{m, t}}; b(\psi_m; C_t, a))$\\
}
\textbf{observe}: $R_{t+1, C_t, A_t}$, $C_{t+1}$, $\actions_{t+1}$ \\
\textbf{update}: Update $\mathcal{B}$ to keep the most recent $K$ tuples of context, action, reward, and timestep data. 
}
\end{algorithm}
\label{algo:ensemble}
\end{minipage}
\label{fig:alg_ensemble}
\end{figure}

\begin{figure}[ht]
\centering
\begin{minipage}{0.47\textwidth}
\begin{algorithm}[H]
\caption{TrainRewardNN}\label{alg:predictive-sampling}
\textbf{Input:} Replay buffer $\mathcal{B}$, loss function $\mathcal{L}$, 
base network weights $\psi$, 
last-layer weights $w$, number of gradient steps $\tau$, step size $\alpha$, minibatch size $K^{\prime}$.  \\
\For{$i = 0, 1, \ldots, \tau - 1$}{
\textbf{sample:} a minibatch $\mathcal{B}^{\prime}$ of size $K^{\prime}$ from replay buffer $\mathcal{B}$\\
\textbf{update} $(\psi, w)$ following Equation \ref{eq:reward_training}
}
\textbf{return: $\psi$, $w$} 
\end{algorithm}
\label{algo:PS_ensemble}
\label{fig:reward}
\end{minipage}
\hfill
\begin{minipage}{0.47\textwidth}
\begin{algorithm}[H]
\caption{TrainSequenceNN}
\label{alg:predictive-sampling}
\textbf{Input:} Replay buffer $\mathcal{B}$, sequence model weights $w^{\mathrm{seq}}$, 
historical last-layer weights $w_{1:t-1}$, number of gradient steps $\tau$, step size $\alpha$, number of future steps $x$.  \\
\For{$i = 0, 1, \ldots, \tau - 1$}{
 \textbf{sample}: $j \sim \mathrm{unif}(\{L, ... , t-1\})$ \\
 \textbf{update} $w^{\mathrm{seq}}$ following Equation \ref{eq:seq}
}
\textbf{return: $w^{\mathrm{seq}}$} 
\label{algo:PS_ensemble}
\end{algorithm}
\label{fig:seq}
\end{minipage}
\end{figure}






\textbf{The Reward Model} Figure~\ref{fig:v_predictive_ensemble} presents a visualization of the ensemble of reward models. The ensemble has $M$ particles, each consists of a base network $b$ defined by weights $\psi_m$, and last layer $f$ defined by weights $w_{m, t}$. Each particle in the ensemble is a reward model that aims to predict the reward $R_{t+1, c, a}$ given context and action pair $(c, a)$. 
Specifically, at each timestep $t \in \mathbb{N}$, the $m$-th reward model predicts $f(w_{m, t}; b(\psi_m; c, a))$.

We maintain a replay buffer $\mathcal{B}$ of the most recent $K$ tuples of context, action, reward, and timestep data. At each timestep, the network weights $w_{1:M}$ and $\psi_{1:M}$ are trained 
via repeatedly sampling a minibatch $\mathcal{B}^{\prime}$ of size $K^{\prime}$, and letting 
\begin{align}
(\psi_m, w_m) \leftarrow (\psi_m, w_m) - \alpha
\sum_{(c, a, r, j) \in \mathcal{B}^{\prime}} \nabla_{(\psi_m, w_m)} \mathcal{L}(f(w_m; b(\psi_m; c, a)), r)
\label{eq:reward_training}
\end{align}
for each $m \in [M]$. Note that we use $w_{m, t}$ to denote the last-layer weight of the $m$-th particle at the $t$-th timestep; when it is clear that we are considering a single timestep, we drop the subscript $t$.  
\subsection{Predicting Future Reward via Sequence Modeling }\label{sec:seq_ensemble}
Given the non-stationarity of the environment, a natural choice to adapt to the changing dynamics is to predict future reward model weights via sequence models, and use the predictive future reward model to select actions. We refer to this agent as the Neural Sequence Ensemble agent

At each timestep $t \in \mathbb{N}$, a Neural Sequence Ensemble agent proceeds as the following: 
\begin{enumerate}
\item Trains an ensemble of $M$ reward models 
and an ensemble of $M$ sequence models 
through updating their weights
using stochastic gradient descent.  
\item Samples $m \sim \mathrm{unif}(\{1, ... , M\})$, uses the $m$-th sequence model 
to predict a future reward model one step ahead of time based on past reward models, and uses this predicted future model to predict a reward at the next timestep $\hat{R}_{t+1, C_t, a}$. 
\item Selects an action that maximizes $\hat{R}_{t+1, C_t, a}$. 
\end{enumerate}

\textbf{The Sequence Model} Figure~\ref{fig:v_predictive_ensemble} presents a visualization of the ensemble of the sequence models as well. The ensemble consists of $M$ particles. Each particle is a sequence model implemented as a recurrent neural network that aims to predict future reward model weights
$w_{m, t+1}$
given historical ones $w_{m, t-L+1:t}$. At each timestep $t \in \mathbb{N}$, the $m$-th sequence model predicts $f^{\mathrm{seq}}(w^{\mathrm{seq}}_{m, t}; w_{m, t-L+1, t})$. The network weights $w_{m}^{\mathrm{seq}}$ are trained via repeatedly sampling $j$ from $\{L, ... , t-1\}$ and letting  
\begin{align}\label{eq:seq}
w^{\mathrm{seq}}_m \leftarrow w^{\mathrm{seq}}_m - \alpha 
 \nabla_{w^{\mathrm{seq}}_m} \mathcal{L}_{\mathrm{MSE}}(f^{\mathrm{seq}}(w^{\mathrm{seq}}_m; w_{m, j - L + 1: j}), w_{m, j+1}).
\end{align}

\subsection{Neural Predictive Ensemble Sampling}
Let us now present NeuralPES. A key distinction between this algorithm and NeuralEnsemble lies in its ability to prioritize information that maintains relevance over a longer period of time. This is achieved through incorporating a new model which we refer to as the predictive model. Specifically, the predictive model is designed to take a function of a context-action pair $(c, a)$ and a future reward model as input. Its purpose is to generate a prediction for the upcoming reward $R_{t+1, c, a}$. When maintaining an ensemble of predictive models for exploration, an agent can suitably prioritize information based on how lasting the information is.

At each timestep $t \in \mathbb{N}$, a NeuralPES agent (see Algorithm \ref{alg:predictive-sampling}):
\begin{enumerate}
\item Trains an ensemble of $M$ reward models, 
an ensemble of $M$ sequence models, 
and an ensemble of $M$ predictive models  
\item Samples $m \sim \mathrm{unif}(\{1, ... , M\})$, and uses the $m$-th sequence model 
to predict a future reward model two steps ahead of time based on past models. 
\item Takes this predicted future model as part of input to the $m$-th predictive model, and predicts a reward at the next timestep $\hat{R}_{t+1, C_t, a}$. 
\item Selects an action that maximizes $\hat{R}_{t+1, C_t, a}$. 
\end{enumerate}

\begin{figure}[ht]
\centering
\begin{minipage}{0.99\textwidth}
\begin{algorithm}[H]
\caption{NeuralPES}\label{alg:predictive-sampling}
\textbf{Input:} Horizon $T$, number of particles in each ensemble $M$, 
loss function $\mathcal{L}$, 
replay buffer size $K$, sequence model input size $L$, number of gradient steps $\tau, \tau_{\mathrm{seq}}, \tau_{\mathrm{pred}}$, step sizes $\alpha, \alpha_{\mathrm{seq}}, \alpha_{\mathrm{pred}}$, minibatch sizes $K^{\prime}, K^{\prime \prime}$.  \\
\textbf{Initialize:} Let replay buffer $\mathcal{B} = \emptyset$, 
and randomly initialize weights $\psi_{1:M}$, $w_{1:M, 0}$, $w^{\mathrm{seq}}_{1:M, 0}$, and $w^{\mathrm{pred}}_{1:M, 0}$\\
\For{$t = 0, 1, \ldots, T-1$}{
\For{$m = 1, 2, \ldots, M$} {
Let $(\psi_m, w_{m, t}) \leftarrow \mathrm{TrainNN}(\mathcal{B}, \mathcal{L}, \psi_m, w_{m, t-1}, \tau, \alpha, K^{\prime})$ \\
Let $w^{\mathrm{seq}}_{m, t} \leftarrow \mathrm{TrainSequenceNN}(\mathcal{B}, w^{\mathrm{seq}}_{m, t-1}, w_{m, 1:t-1}, \tau_{\mathrm{seq}}, \alpha_{\mathrm{seq}}, 2)$ \\
Let $w^{\mathrm{pred}}_{m, t} \leftarrow \mathrm{TrainPredictiveNN}(\mathcal{B},  
\mathcal{L}, w^{\mathrm{pred}}_{m, t-1}, \psi_m, 
w_{m, 1:t-1}, 
\tau_{\mathrm{pred}}, \alpha_{\mathrm{pred}}, K^{\prime \prime})$ 
}
\textbf{sample}: $m \sim \mathrm{unif}(\{1, ... , M\})$ \\
\textbf{roll out:} $\hat{w}_{m, t+2} = f^{\mathrm{seq}} ({w_{m, t}^{\mathrm{seq}}}; w_{m, t - L + 1 : t})$\\
\textbf{select}: $A_t \in \argmax_{a \in \actions_t} \sum_{i = 1}^M f^{\mathrm{pred}} (w^{\mathrm{pred}}_{i, t}; (\hat{w}_{m, t+2} \odot b(\psi_m; C_t, a))$ \\
\textbf{observe}: $R_{t+1, C_t, A_t}$, $C_{t+1}, \actions_{t+1}$ \\
\textbf{update}: Update $\mathcal{B}$ to keep the most recent $K$ tuples of context, action, reward, and timestep data.
}
\end{algorithm}
\label{algo:PS_ensemble}
\end{minipage}
\end{figure}

\begin{figure}[ht]
\centering
\begin{minipage}{0.99\textwidth}
\begin{algorithm}[H]
\caption{TrainPredictiveNN}
\textbf{Input:} Replay buffer $\mathcal{B}$, loss function $\mathcal{L}$, 
base network weights $\psi$, 
historical last-layer weights $w_{1:t-1}$, number of gradient steps $\tau$, step size $\alpha$, minibatch size $K^{\prime \prime}$.  \\
\For{$i = 0, 1, \ldots, \tau - 1$}{
\textbf{sample:} a minibatch $\mathcal{B}^{\prime \prime}$ of size $K^{\prime \prime}$ from replay buffer $\mathcal{B}$\\
\textbf{update}:
$w^{\mathrm{pred}} \leftarrow w^{\mathrm{pred}} - \alpha 
\sum_{(c, a, r, j) \in \mathcal{B}^{\prime \prime}}
\nabla_{w^{\mathrm{pred}}} \mathcal{L}(f^{\mathrm{pred}}(w^{\mathrm{pred}};  w_{j+2}\odot b(\psi; c, a)), r)$
}
\textbf{return: $w^{\mathrm{pred}}$} 
\end{algorithm}
\label{algo:PS_ensemble}
\end{minipage}
\begin{minipage}{0.46 \textwidth}
\end{minipage}
\end{figure}

\textbf{The Predictive Model}  Figure~\ref{fig:v_predictive_ensemble} also presents a visualization of the ensemble of the predictive models. The ensemble consists of $M$ particles. Each particle in the ensemble is a predictive model that aims to predict the next reward $R_{t+1,c,a}$ provided context-action pair $(c, a)$ and a future reward model of two timesteps ahead of time. Specifically, 
at each timestep $t \in \mathbb{N}$, the $m$-th predictive model aims to predict $R_{t+1, c, a}$ by taking an intermediate representation, i.e., $\hat{w}_{m, t+2} \odot b(\psi; c, a) $, as input. 



We maintain a replay buffer $\mathcal{B}$ of the most recent $K$ tuples of context, action, reward, and timestep data. The network weights $w^{\mathrm{pred}}_{1:M}$ are trained  
via repeatedly sampling a minibatch $\mathcal{B}^{\prime \prime}$ of size $K^{\prime \prime}$
\begin{align}
w^{\mathrm{pred}}_m \leftarrow w^{\mathrm{pred}}_m - \alpha 
\sum_{(c, a, r, j) \in \mathcal{B}^{\prime \prime}}
\nabla_{w^{\mathrm{pred}}_m} \mathcal{L}(f^{\mathrm{pred}}(w^{\mathrm{pred}}_m;  w_{m, j+2}\odot b(\psi_m; c, a)), r)
\label{eq:predictive_training}
\end{align}
for each $m \in [M]$. Note that we use $w_{m, t}^{\mathrm{pred}}$ to denote the last-layer weight of the $m$-th particle at the $t$-th timestep; when it is clear that we are considering a single timestep, we drop the subscript $t$.  

\textbf{Regularization to Address Loss of Plasticity} To address the loss of plasticity, we regularize each particle's weight towards its initial weight in the last layer of the reward model ensemble and the predictive model ensemble \cite{kumar2023maintaining}.
\eqref{eq:reward_training} and \eqref{eq:predictive_training} now becomes 
\begin{equation}\label{eq:reg}
\begin{split}
(\psi_m, w_m) \leftarrow (\psi_m, w_m) - \alpha
\sum_{(c, a, r, j) \in \mathcal{B}^{\prime}} \nabla_{(\psi_m, w_m)} \left\{\mathcal{L}(f(w_m; b(\psi_m; c, a)), r) + \|w_m - w_{m, 0}\|_2\right\}.\\
w^{\mathrm{pred}}_m \leftarrow w^{\mathrm{pred}}_m - \alpha 
\sum_{(c, a, r, j) \in \mathcal{B}^{\prime \prime}}
\nabla_{w^{\mathrm{pred}}_m} \left\{\mathcal{L}(f^{\mathrm{pred}}(w^{\mathrm{pred}}_m;  w_{m, j+2}\odot b(\psi_m; c, a)), r) + \|w_m^{\mathrm{pred}} - w_{m, 0}^{\mathrm{pred}}\|_2 \right\}. 
\end{split}
\end{equation}




\subsection{Theoretical Insights and Analysis}

We provide intuition and evidence that NeuralPES's prioritizes the acquisition of lasting information. 
\subsubsection{NeuralPES Prioritizes Lasting Information}
We focus on comparing NeuralPES and NeuralEnsemble in linear contextual bandits. In such contexts, NeuralPES can be viewed as a neural network-based implementation of an algorithm which we refer to as linear predictive sampling (LinPS); NeuralEnsemble can be viewed as a neural network-based implementation of TS. 
In a linear contextual bandit, a LinPS agent carries out the following three-step procedure at each timestep, and a TS agent carries out a similar procedure, replacing $\theta_{t+2}$ with $\theta_{t+1}$: 
\begin{enumerate}
\item samples  $\hat{\theta}_{t+2}$ from the posterior $\mathbb{P}(\theta_{t+2} \in \cdot | H_t)$, 
and $\hat{\phi}_t$ from the posterior $\mathbb{P}(\phi \in \cdot | H_t)$. 
\item estimates the reward 
$\hat{R}_{t+1, C_t, a} = \mathbb{E}[R_{t+1, C_t, a} | H_t, \phi = \hat{\phi}_t, \theta_{t+2} = \hat{\theta}_{t+2}]$,
\item and selects an action that maximizes the sample $A_t \in \argmax_{a \in \mathcal{A}_t} \hat{R}_{t+1, C_t, a}$.
\end{enumerate}

The procedures are carried out by approximating $\mathbb{P}(\theta_{t+1} \in \cdot | H_t)$ using the ensemble of the last layers of the reward models, approximating $\mathbb{P}(\phi \in \cdot | H_t)$ using the ensemble of the base models, approximating $\mathbb{P}(\theta_{t+2} \in \cdot | H_t)$ utilizing the sequence models; the reward estimation step of LinPS utilizes the predictive models. 

To compare the behaviors of NeuralPES and NeuralEnsemble, we can compare LinPS with TS. 
It is worth noting that both algorithms trade off exploration and exploitation in a similar fashion, yet TS trades off between optimizing the immediate reward and learning about $\phi$ and $\theta_{t+1}$ and LinPS trades off between optimizing the immediate reward and learning about $\phi$ and $\theta_{t+2}$.  
If $\theta_{t+1} = \theta_{t}$ for all $t \in \mathbb{N}$, then the environment is stationary and the two algorithms are equivalent. 
In general, compared with $\theta_{t+1}$, $\theta_{t+2}$ better represents valuable information that is helpful for making future decisions. Aiming to learn about $\theta_{t+2}$, LinPS strategically prioritizes information that is still valuable in the next timestep and does not acquire information for which its value immediately vanishes. 
\subsubsection{Theoretical Analysis}
Next, we present a regret analysis that offers further evidence of LinPS's effectiveness in prioritizing lasting information. In particular, we demonstrate that LinPS excels in environments where a substantial amount of information is transient. This success stems from its strategic approach to acquire less of such information. We assume that the action set is known and remains unchanged, $\mathcal{A}_t = \mathcal{A}$ for all $t \in \mathbb{N}$, and that $\phi$ is known. 
We first introduce the notion of regret. 
\begin{definition}
[\bf{Regret}]
\label{def:regret}
For all policies $\pi$ and $T \in \mathbb{N}$, 
the regret and long-run average regret associated with a policy $\pi$ over $T$ timesteps in a linear contextual bandit is 
$
\mathrm{Regret}(T; \pi) = \sum_{t=0}^{T-1} \E \left[R_{t+1, *} - R_{t+1, C_t, A_t^{\pi}}\right],
$
and 
$\overline{\mathrm{Regret}}(\pi) = 
\limsup_{T \rightarrow +\infty}
\frac{1}{T} \mathrm{Regret}(T; \pi)$, 
respectively, 
where   
$R_{t+1, *} =  
\max_{a \in \actions}\E[R_{t+1, C_t, a} | \theta_{t}]$.  
\end{definition}

We use $\mathrm{Regret}(T)$ and $\overline{\mathrm{Regret}}$ to denote the regret of LinPS and present a regret bound on LinPS. 
\begin{restatable}{theorem}{psregretbound}
{\bf{(LinPS Regret Bound)}}
In a linear contextual bandit, suppose 
$\{\theta_t\}_{t \in \mathbb{N}}$ is a reversible Markov chain. For all $T \in \mathbb{N}$, the regret and the long-run average regret of LinPS is upper-bounded by 
${\mathrm{Regret}}(T) \leq \sqrt{\frac{d}{2}T \left[\I(\theta_2; \theta_1) + 
(T-1) \I(\theta_{3}; \theta_{2} | \theta_{1})\right]} 
$ and 
$\overline{\mathrm{Regret}} \leq \sqrt{\frac{d}{2} \I(\theta_{3}; \theta_{2} | \theta_{1})}. 
$ 
\label{theorem:ps_regret}
\end{restatable}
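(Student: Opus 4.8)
The plan is to run the information-ratio analysis of Russo and Van Roy, adapted to the moving ``predictive'' learning target that LinPS tracks. For each $t$ write $\Delta_t=\E[R_{t+1,*}-R_{t+1,C_t,A_t}]$ for the per-step regret, let $O_{t+1}=(A_t,R_{t+1,C_t,A_t})$ be the step-$t$ observation, and let $\chi_t$ denote a learning target with associated information gain $g_t=\I(\chi_t;O_{t+1}\mid H_t)$. The argument rests on three pieces: (i) a per-step information-ratio bound $\Delta_t^2\le\frac d2 g_t$; (ii) a cumulative information bound $\sum_{t=0}^{T-1}g_t\le\I(\theta_2;\theta_1)+(T-1)\I(\theta_3;\theta_2\mid\theta_1)$; and (iii) a Cauchy--Schwarz assembly, $\mathrm{Regret}(T)=\sum_t\Delta_t\le\sqrt{T\sum_t\E[\Delta_t^2]}\le\sqrt{\frac d2 T\sum_t g_t}$. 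Plugging (ii) into (iii) yields the finite-horizon bound, and dividing by $T$ and letting $T\to\infty$ annihilates the $\I(\theta_2;\theta_1)/T$ term to give the average-regret bound.

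For (i) I would take $\chi_t=\argmax_{a\in\actions}\E[R_{t+1,C_t,a}\mid H_t,\theta_{t+2}]$, the surrogate-optimal action, which is a function of $\theta_{t+2}$. The key observation is that LinPS is exactly Thompson sampling against this surrogate target: it draws $\hat\theta_{t+2}\sim\Pr(\theta_{t+2}\mid H_t)$ and plays greedily, so $A_t$ and $\chi_t$ are identically distributed given $H_t$. Under this probability-matching property the surrogate problem is linear in the $d$-dimensional feature $\phi(C_t,\cdot)$ with random parameter $\E[\theta_{t+1}\mid H_t,\theta_{t+2}]$, so the Russo--Van Roy linear information-ratio computation applies verbatim and gives $\Delta_t^2\le\frac d2 g_t$ with $g_t\le\I(\theta_{t+2};O_{t+1}\mid H_t)$ by data processing. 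The remaining subtlety is that regret is scored against the $\theta_t$-benchmark $R_{t+1,*}$ rather than the surrogate one; here reversibility enters. Stationarity and reversibility give $(\theta_t,\theta_{t+1})\eqdist(\theta_{t+2},\theta_{t+1})$, hence $\E[R_{t+1,*}]=\E[\max_{a}\phi(C_t,a)^\top\E[\theta_{t+1}\mid\theta_{t+2}]]$, which a conditioning (Jensen) step bounds above by the surrogate optimal value $\E[\max_a\E[R_{t+1,C_t,a}\mid H_t,\theta_{t+2}]]$. This inequality runs in the right direction, so the surrogate regret upper-bounds the true regret.

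Piece (ii) is the crux, and the step I expect to fight with. Because $O_{t+1}$ depends on $\theta_{t+2}$ only through $\theta_{t+1}$, the chain rule gives $g_t\le\I(\theta_{t+2};O_{t+1}\mid H_t)=\I(\theta_{t+2};H_{t+1})-\I(\theta_{t+2};H_t)$; since the targets $\theta_{t+2}$ drift with $t$ this does not telescope, so I would instead prove a per-step estimate. The startup term is immediate from data processing, $g_0=\I(\theta_2;O_1)\le\I(\theta_2;\theta_1)$. For $t\ge1$ the goal is $g_t\le\I(\theta_{t+2};\theta_{t+1}\mid\theta_t)=\I(\theta_3;\theta_2\mid\theta_1)$, the last equality by time-homogeneity. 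The naive relaxation $g_t\le\I(\theta_{t+2};\theta_{t+1}\mid H_t)$ is too weak --- that quantity generally exceeds $\I(\theta_3;\theta_2\mid\theta_1)$ --- so the proof must use that $O_{t+1}$ is a \emph{noisy} read of $\theta_{t+1}$ and that, once $H_t$ contains at least one observation ($t\ge1$), the filtered predictive uncertainty about $\theta_{t+1}$ has equilibrated enough to cap the lasting information at $\I(\theta_3;\theta_2\mid\theta_1)$. Reversibility is exactly what lets the backward-facing information in $H_t$ be charged against the forward one-step conditional $\I(\theta_3;\theta_2\mid\theta_1)$; in the Gaussian/AR case this reduces to a steady-state Kalman identity in which the inequality becomes tight precisely in the noiseless limit, and the worst step is $t=1$.

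Finally I would assemble the pieces: Jensen gives $\E[\Delta_t]^2\le\E[\Delta_t^2]\le\frac d2\,\I(\chi_t;O_{t+1}\mid H_t)$, Cauchy--Schwarz across $t$ produces the factor $T$, and (ii) supplies the bracketed information term, yielding $\mathrm{Regret}(T)\le\sqrt{\frac d2 T[\I(\theta_2;\theta_1)+(T-1)\I(\theta_3;\theta_2\mid\theta_1)]}$. The long-run average bound follows by dividing by $T$ and taking $\limsup_{T\to\infty}$, where stationarity makes the per-step information terms genuinely time-homogeneous and renders the startup term $\I(\theta_2;\theta_1)$ asymptotically negligible. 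The main obstacle throughout is (ii): charging each step only the lasting information $\I(\theta_3;\theta_2\mid\theta_1)$ instead of the full one-step information gain is exactly where the Markov structure and reversibility must be invoked, rather than generic entropy inequalities.
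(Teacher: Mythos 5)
Your pieces (i) and (iii) match the paper's argument: the benchmark swap from $\theta_t$ to $\theta_{t+2}$ via reversibility, the reduction to Russo--Van Roy's Proposition~2 to get the $d/2$ information ratio against the target $\theta_{t+2}$, and the Cauchy--Schwarz assembly are all exactly what the paper does (Lemma~\ref{lemma:information_ratio} and the proof of Theorem~\ref{theorem:general_regret}). The gap is in piece (ii), and it is a real one: the per-step bound you aim for, $g_t=\I(\theta_{t+2};O_{t+1}\mid H_t)\le\I(\theta_3;\theta_2\mid\theta_1)$ for $t\ge 1$, is false in general. Take the stationary special case $\theta_{t+1}=\theta_t$ for all $t$: then $\I(\theta_3;\theta_2\mid\theta_1)=0$, yet $\I(\theta_{t+2};O_{t+1}\mid H_t)=\I(\theta_1;O_{t+1}\mid H_t)>0$ generically for every $t$ at which the agent is still learning. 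No equilibration, Kalman steady-state, or reversibility argument can rescue a per-step inequality that already fails in this limit; the theorem's bound survives only because the startup term $\I(\theta_2;\theta_1)$ absorbs the \emph{total} information acquired over all $T$ steps, not because each step's gain is individually small.

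The paper's resolution is precisely the telescoping you dismissed as unavailable. Since $\theta_{t+2}\perp O_{t+1}\mid(\theta_{t+1},H_t)$, the chain rule gives the exact identity
\begin{align*}
\I(\theta_{t+2};O_{t+1}\mid H_t)=\I(\theta_{t+2};\theta_{t+1}\mid H_t)-\I(\theta_{t+2};\theta_{t+1}\mid H_{t+1}),
\end{align*}
and although the target drifts, reindexing the sum leaves a startup term $\I(\theta_2;\theta_1)$ plus cross-terms of the form $\I(\theta_{t+2};\theta_{t+1}\mid H_t)-\I(\theta_{t+1};\theta_{t}\mid H_t)$. Each cross-term is bounded by $\I(\theta_{t+2},\theta_t;\theta_{t+1}\mid H_t)-\I(\theta_{t};\theta_{t+1}\mid H_t)=\I(\theta_{t+2};\theta_{t+1}\mid H_t,\theta_t)=\I(\theta_{t+2};\theta_{t+1}\mid\theta_t)=\I(\theta_3;\theta_2\mid\theta_1)$, using monotonicity, the chain rule, the Markov property ($H_t\perp(\theta_{t+1},\theta_{t+2})\mid\theta_t$), and time-homogeneity. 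So the ``lasting information'' charge of $\I(\theta_3;\theta_2\mid\theta_1)$ per step is a statement about the reindexed \emph{sum}, obtained by generic information identities plus the Markov structure --- reversibility is needed only in piece (i), not here. Your plan as written would stall at (ii); replacing it with this summation argument completes the proof.
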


The key proof idea essentially follows from that of \citep{liu2022stationary} and \citep{russo2016information}. For the sake of completeness, we include the proof in the appendix. 


It is worth noting that when $\theta_{t +1} = \theta_t$ for all $t \in \mathbb{N}_0$, we have $
    {\mathrm{Regret}}(T) \leq \sqrt{\frac{d}{2}T \mathbb{H}(\theta_1)}. 
$
We recover a regret bound for TS in a stationary linear contextual bandit. 
In the other extreme, if $\theta_t$ changes very frequently, say if $\{\theta_t\}_{t \in \mathbb{N}}$ is an i.i.d. sequence each with non-atomic distribution, then the regret of LinPS is zero that LinPS achieves optimal. This suggests that when information about $\theta_t$ is not lasting, LinPS stops acquiring this information and is optimal. 

To specialize the bound to a particular example, we introduce linear contextual bandits with abrupt changes. Similar models were introduced by \citep{pmlr-v31-mellor13a} and \citep{liu2023nonstationary}. 
\begin{example} [\bf{Linear Contextual Bandit with Abrupt Changes}] 
For all $i \in [d]$, let $q_{i} \in [0, 1]$, and $\{B_{t, i}\}_{t \in \mathbb{N}}$ be an i.i.d. sequence of Bernoulli r.v.'s each with success probability $q_{i}$. 
For all $i \in [d]$, let $\{\beta_{t, i}\}_{t \in \mathbb{N}}$ be an i.i.d. sequence. 
Consider a linear contextual bandit where for all $i \in [d]$, 
$\theta_{1, i} = \beta_{1, i}$, and 
$\{\theta_{t, i}\}_{t \in \mathbb{N}}$ transitions according to $\theta_{t+1, i} = B_{t, i}\beta_{t+1, i} + (1 - B_{t, i})\theta_{t, i}$.  
\label{example:linear_contextual_bandit_abrupt}
\end{example}

\begin{restatable}{corollary}{abrupt}
{\bf{(LinPS Regret Bound in Example~\ref{example:linear_contextual_bandit_abrupt})}}
For all $T \in \mathbb{N}$, the regret and long-run average regret of LinPS in a linear contextual bandit with abrupt changes is upper-bounded by 
$
    {\mathrm{Regret}}(T) \leq \sqrt{\frac{d}{2}T \left[\sum_{i = 1}^d (1 - q_{i}) \H(\theta_{1,i})
+ (T-1) \sum_{i = 1}^d \left[2 \H(q_{i}) + q_{i}(1 - q_{i}) \H(\theta_{1,i})\right]\right]}, 
$
and 
$\overline{\mathrm{Regret}} \leq \sqrt{\frac{d}{2} \sum_{i = 1}^d \left[2\H(q_{i}) + q_{i}(1 - q_{i}) \H(\theta_{1,i})\right]}, 
$ 
where $\mathbb{H}(q_{t,i})$ denotes to the entropy of of a Bernoulli random variable with success probability $q_{t, i}$. 
\label{corollary:ps_regret}
\end{restatable}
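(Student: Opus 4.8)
The plan is to derive Corollary~\ref{corollary:ps_regret} directly from Theorem~\ref{theorem:ps_regret} by first checking that the abrupt-change dynamics give a reversible Markov chain (so the theorem applies), and then bounding the two information quantities $\I(\theta_2;\theta_1)$ and $\I(\theta_3;\theta_2\mid\theta_1)$ appearing in the theorem by the stated per-coordinate entropy expressions. For the reversibility check, I would observe that each coordinate evolves as an \emph{independent-resampling} chain: from any state it stays put with probability $1-q_i$ and redraws a fresh sample from the law $\mu_i$ of $\beta_{1,i}$ with probability $q_i$. Since $\theta_{1,i}\eqdist\beta_{1,i}$, the chain starts in the stationary distribution $\mu_i$, and the kernel $P_i(x,\mathrm{d}y)=(1-q_i)\delta_x(\mathrm{d}y)+q_i\mu_i(\mathrm{d}y)$ satisfies detailed balance with respect to $\mu_i$ (both summands are symmetric under $x\leftrightarrow y$). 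Assuming the families $\{B_{t,i}\}_t$ and $\{\beta_{t,i}\}_t$ are independent across $i$, the joint chain $\{\theta_t\}$ is a product of reversible chains and is therefore reversible, so Theorem~\ref{theorem:ps_regret} is applicable.

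Next I would exploit independence across coordinates to factorize, writing $\I(\theta_2;\theta_1)=\sum_{i=1}^d\I(\theta_{2,i};\theta_{1,i})$ and $\I(\theta_3;\theta_2\mid\theta_1)=\sum_{i=1}^d\I(\theta_{3,i};\theta_{2,i}\mid\theta_{1,i})$; this is where I use that the triples $(\theta_{1,i},\theta_{2,i},\theta_{3,i})$ are mutually independent across $i$ (and I would note that the $\H(\theta_{1,i})$ are Shannon entropies, so the intended setting has each $\theta_{1,i}$ discrete with finite entropy). It then remains to prove the two single-coordinate bounds $\I(\theta_{2,i};\theta_{1,i})\le(1-q_i)\H(\theta_{1,i})$ and $\I(\theta_{3,i};\theta_{2,i}\mid\theta_{1,i})\le 2\H(q_i)+q_i(1-q_i)\H(\theta_{1,i})$; substituting them into the theorem and summing yields both displayed regret bounds.

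For the single-coordinate bounds I would introduce the change-indicators $B_{1,i},B_{2,i}$ as auxiliary variables. For the first, since adding a variable to one argument only increases mutual information, $\I(\theta_{2,i};\theta_{1,i})\le\I(\theta_{2,i},B_{1,i};\theta_{1,i})=\I(B_{1,i};\theta_{1,i})+\I(\theta_{2,i};\theta_{1,i}\mid B_{1,i})$; the first term vanishes because $B_{1,i}\perp\theta_{1,i}$, and a case split gives $\I(\theta_{2,i};\theta_{1,i}\mid B_{1,i})=q_i\cdot 0+(1-q_i)\H(\theta_{1,i})$, since on $\{B_{1,i}=1\}$ we have $\theta_{2,i}=\beta_{2,i}\perp\theta_{1,i}$ while on $\{B_{1,i}=0\}$ we have $\theta_{2,i}=\theta_{1,i}$. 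For the conditional term I would similarly bound $\I(\theta_{3,i};\theta_{2,i}\mid\theta_{1,i})\le\I(\theta_{3,i},B_{1,i},B_{2,i};\theta_{2,i}\mid\theta_{1,i})$ and apply the chain rule to split off $\I(B_{1,i},B_{2,i};\theta_{2,i}\mid\theta_{1,i})\le\H(B_{1,i})+\H(B_{2,i})=2\H(q_i)$ from $\I(\theta_{3,i};\theta_{2,i}\mid\theta_{1,i},B_{1,i},B_{2,i})$. The latter I would evaluate by a four-way case analysis over $(B_{1,i},B_{2,i})$: the only case in which $\theta_{3,i}$ carries nonzero information about $\theta_{2,i}$ given $\theta_{1,i}$ is the ``jump-then-freeze'' case $(1,0)$, where $\theta_{2,i}=\beta_{2,i}$ is fresh and $\theta_{3,i}=\theta_{2,i}$, contributing $q_i(1-q_i)\H(\theta_{1,i})$; the other three cases give $0$ because $\theta_{2,i}$ is deterministic given $\theta_{1,i}$ or $\theta_{3,i}$ is an independent fresh draw.

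The main obstacle is the conditional term $\I(\theta_{3,i};\theta_{2,i}\mid\theta_{1,i})$: the bookkeeping must cleanly separate the cost of revealing the change-indicators from the residual information, and the four-way case analysis must carefully track that a fresh resample destroys correlation whereas a ``freeze'' copies the previous value verbatim, so that $\theta_{2,i}=\theta_{3,i}$ forces the conditional mutual information to equal $\H(\theta_{1,i})$ in the single surviving case. Two subsidiary points also warrant care and I would flag them explicitly: verifying reversibility, needed to legitimately invoke the theorem, and the independence-across-coordinates assumption, needed for the factorization and implicit in the additive form of the bound. With these in hand, summing the per-coordinate estimates and inserting them into Theorem~\ref{theorem:ps_regret} gives exactly the claimed ${\mathrm{Regret}}(T)$ and $\overline{\mathrm{Regret}}$ bounds.
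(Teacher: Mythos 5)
Your proposal is correct and takes essentially the same route as the paper: invoke Theorem~\ref{theorem:ps_regret} and bound $\I(\theta_2;\theta_1)$ and $\I(\theta_3;\theta_2\mid\theta_1)$ coordinate-by-coordinate. The only difference is that the paper outsources those two bounds to Lemma~8 of \citep{liu2023nonstationary}, whereas you derive them directly via the change-indicators $B_{1,i},B_{2,i}$; your reversibility check, the independence-across-coordinates factorization, and the four-way case analysis (with the lone surviving ``jump-then-freeze'' case contributing $q_i(1-q_i)\H(\theta_{1,i})$) are all sound and make the argument self-contained.
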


We can use Theorem~\ref{corollary:ps_regret} to investigate how the performance of LinPS depends on various key parameters of the bandit. 
On one hand, when $q_i = 0$ for all $i \in [d]$, i.e., when the environment is stationary, the bound becomes $\sqrt{\frac{d}{2} T \mathbb{H}(\theta_1)}$, which recovers a sublinear regret bound for TS in a stationary environment. 
On the other hand, as the $q_i$'s approach $1$, the regret bound approaches $0$, suggesting that LinPS performs well. Recall that this is a setting where $\theta_t$ are redrawn frequently, and the information associated with $\theta_t$ is not enduring. Our regret bound further confirms that LinPS continues to excel in such environments. 

We consider another example, which models bandits with "smooth" changes. Similar bandits have been introduced by \citep{burtini2015improving, 
gupta2011thompson, gaussianar1, kuhn2015wireless, liu2023nonstationary, slivkins2008adapting}. 
\begin{example}\label{example:ar1}
[\bf{AR(1) Linear Contextual Bandit}] 
Let $\gamma \in [0, 1]^d$, with its $i$-th coordinate denoted $\gamma_i$. 
Consider a linear contextual bandit where $\{\theta_{t, i}\}_{t \in \mathbb{N}}$ transitions independently according to an AR(1) process with parameter $\gamma_i$: $\theta_{t+1, i} = \gamma_i \theta_{t, i} + W_{t+1, i}$, where $\{W_{t, i}\}_{t \in \mathbb{N}}$ is a sequence of i.i.d. $\mathcal{N}(0, 1 - \gamma_i^2)$ r.v.'s and $\theta_{1, i} \sim \mathcal{N}(0, 1)$. 
\end{example}

Applying Theorem~\ref{theorem:ps_regret} to an AR(1) linear contextual bandit, we establish the following result. 
\begin{restatable}{corollary}{ar}
\label{corollary:ar1}
{\bf{(LinPS Regret Bound in AR(1) Linear Contextual Bandit)}}
For all $T \in \mathbb{N}$, the regret and long-term average regret of LinPS in an AR(1) linear contextual bandit is upper-bounded by 
${\mathrm{Regret}}(T) \leq \sqrt{\frac{d}{4}T \left[\sum_{i = 1}^d \log \left(\frac{1}{1 - \gamma_i^2}\right)
+ \sum_{t = 1}^{T-1} \sum_{i = 1}^d \log \left(1 + \gamma_i^2\right)\right]}, 
\overline{\mathrm{Regret}}(T) \leq \sqrt{\frac{d}{4}  \sum_{i = 1}^d \log \left(1 + \gamma_i^2\right)}
$
if $\gamma_i < 1$ for all $i \in [d]$. 
\end{restatable}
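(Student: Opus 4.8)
The plan is to specialize Theorem~\ref{theorem:ps_regret} to the AR(1) bandit by evaluating the two information quantities $\I(\theta_2;\theta_1)$ and $\I(\theta_3;\theta_2\mid\theta_1)$ in closed form and substituting them into the general bound. First I would check the hypothesis: since $\theta_{1,i}\sim\mathcal{N}(0,1)$ and $W_{t,i}\sim\mathcal{N}(0,1-\gamma_i^2)$, each coordinate process is a stationary Gaussian AR(1) chain with marginal variance $1$, and a stationary Gaussian AR(1) chain is a reversible Markov chain, so Theorem~\ref{theorem:ps_regret} applies whenever $\gamma_i<1$ for all $i$.

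Because the coordinates evolve independently (independent initial values $\theta_{1,i}$ and independent driving noise $W_{t,i}$), the triples $(\theta_{1,i},\theta_{2,i},\theta_{3,i})$ are mutually independent across $i\in[d]$, so both information quantities additively decompose as $\I(\theta_2;\theta_1)=\sum_{i=1}^d\I(\theta_{2,i};\theta_{1,i})$ and $\I(\theta_3;\theta_2\mid\theta_1)=\sum_{i=1}^d\I(\theta_{3,i};\theta_{2,i}\mid\theta_{1,i})$. For each coordinate I would invoke the jointly-Gaussian identity $\I(X;Y)=-\tfrac12\log(1-\rho^2)$, where $\rho$ is the (conditional) correlation coefficient, reducing the problem to second-moment bookkeeping for the AR(1) process.

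For the first term, a direct computation gives $\mathrm{Var}(\theta_{1,i})=\mathrm{Var}(\theta_{2,i})=1$ and $\mathrm{Cov}(\theta_{1,i},\theta_{2,i})=\gamma_i$, hence $\rho=\gamma_i$ and $\I(\theta_{2,i};\theta_{1,i})=\tfrac12\log\tfrac{1}{1-\gamma_i^2}$. For the conditional term I would condition on $\theta_{1,i}$ and expand $\theta_{2,i}=\gamma_i\theta_{1,i}+W_{2,i}$ and $\theta_{3,i}=\gamma_i^2\theta_{1,i}+\gamma_i W_{2,i}+W_{3,i}$; the conditional variances are $1-\gamma_i^2$ and $1-\gamma_i^4$ and the conditional covariance is $\gamma_i(1-\gamma_i^2)$, so the conditional correlation squared is $\gamma_i^2/(1+\gamma_i^2)$ and $\I(\theta_{3,i};\theta_{2,i}\mid\theta_{1,i})=\tfrac12\log(1+\gamma_i^2)$. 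Substituting the two sums into Theorem~\ref{theorem:ps_regret} and absorbing the common factor $\tfrac12$ into the $\tfrac{d}{2}$ prefactor (yielding $\tfrac{d}{4}$) gives the stated bounds, after writing $(T-1)\sum_i\log(1+\gamma_i^2)=\sum_{t=1}^{T-1}\sum_i\log(1+\gamma_i^2)$ since the summand is independent of $t$.

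The main obstacle is getting the conditional second-moment structure of $(\theta_{2,i},\theta_{3,i})$ given $\theta_{1,i}$ exactly right, because the clean final form relies on the factorization $1-\gamma_i^4=(1-\gamma_i^2)(1+\gamma_i^2)$, so that $\sqrt{(1-\gamma_i^2)(1-\gamma_i^4)}=(1-\gamma_i^2)\sqrt{1+\gamma_i^2}$ and the conditional correlation collapses to $\gamma_i/\sqrt{1+\gamma_i^2}$; everything else is a routine Gaussian calculation.
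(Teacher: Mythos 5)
Your proposal is correct and follows essentially the same route as the paper: decompose both mutual informations coordinate-wise by independence, evaluate them in closed form for the Gaussian AR(1) chain (obtaining $\tfrac12\log\tfrac{1}{1-\gamma_i^2}$ and $\tfrac12\log(1+\gamma_i^2)$), and substitute into Theorem~\ref{theorem:ps_regret}. The only cosmetic difference is that you use the Gaussian correlation identity $\I(X;Y)=-\tfrac12\log(1-\rho^2)$ where the paper works with differential-entropy differences and the Markov property, and you additionally verify reversibility explicitly, which the paper leaves implicit.
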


The regret bound suggests that LinPS prioritizes the acquisition of lasting information. Specifically, when $\gamma_i = 0$ for all $i \in [d]$, information about all $\theta_{t,i}$'s lose their usefulness immediately. In such contexts, LinPS achieves $0$ regret and is such optimal. In addition, the regret of LinPS remains small when $\gamma_i$ is small for each $i \in [d]$, suggesting that the algorithms consistently performs well when information about $\theta_{t, i}$'s are not durable.

\section{Experiments}
In this section, we introduce AR(1) contextual logistic bandit experiment and two experiments built on real-world data. Among the two real-world dataset experiments, one leverages one-week user interactions on Microsoft News website in time order and the other is built on Kuai's short-video platform's two-month user interaction data in time order. We consider Neural Ensemble \citep{osband2016deep}, Neural LinUCB \citep{xu2022neural} and Neural Linear \citep{riquelme2018deep} and their sliding window versions \citep{cheung2019learning, cheung2022hedging, garivier2008upper, russac2020algorithms, srivastava2014surveillance, trovo2020sliding} (to address nonstationarity in environments) as our baselines for comparison. All experiments are performed on AWS with 1 A100 40GB GPU per experiment, each with 8 CPUs, and each experiment repeated over 20 distinct seeds. To scale the experiments to the large scale experiments, we learn every batch of interactions instead of per interaction, more details in Appendix \ref{sec:scale}. Constrained by computation, we do not consider Neural UCB \citep{zhou2020neural} and Neural TS \citep{zhang2020neural}, given their computation requirement of inverting square matrices with dimensions equal to neural network parameter count. 

\subsection{AR(1) Contextual Logistic Bandit}
Following Example~\ref{example:ar1}, An AR(1) contextual logistic bandit changes its reward function to 
$R_{t, c, a} \sim \mathrm{Bernoulli}\left(\sigma\left(\phi(c, a)^{\top} \theta_t\right)\right)$, all others the same. We set number of actions to 10, and $d = 10$, $\gamma_{i} = 0.99^i$. Each entry in $\theta$ is initialized with $\mathcal{N}(0, 0.01)$. Hyperparameters of the agents are presented in Appendix \ref{sec:hyperparameter}. The average reward is presented in Table \ref{tab:results}, and Figure \ref{fig:toy_exp}.

\begin{table}
    \centering
    \begin{tabular}{cccc}
    \toprule
    Algorithm & AR(1) Average Reward & MIND 1-week Average CTR & Kuai 2-month Average Rating\\
    \midrule
    Neural Ensemble &$0.5683 \pm 0.0025$ & $0.1503 \pm 0.0013$ & $1.2614 \pm 0.0017$ \\
    Window Neural Ensemble &$0.5688 \pm 0.0025$& $0.1513 \pm 0.0012$& $1.3162 \pm 0.0013$\\ 
    Neural LinUCB & $0.5684 \pm 0.0020$ & $0.1468 \pm 0.0015$& $1.2798 \pm 0.0020$\\
    Window Neural LinUCB & $0.5730 \pm 0.0031$ & $0.1482 \pm 0.0020$& $1.3172 \pm 0.0023$\\ 
    Neural Linear & $0.5701 \pm 0.0027$ & $0.1467 \pm 0.0016$ & $1.2690 \pm 0.0020$\\
    Window Neural Linear & $0.5741 \pm 0.0029$ & $0.1492 \pm 0.0015$ & $1.3171 \pm 0.0026$\\
    NeuralPES & $\mathbf{0.5850 \pm 0.0023}$ & $\mathbf{0.1552 \pm 0.0013}$ & $\mathbf{1.3421 \pm 0.0016}$ \\ 
    \bottomrule
    \end{tabular}
    \caption{Empirical Experiment Results}
    \label{tab:results}
\end{table}

\begin{figure}[h!]
    \centering
    \begin{subfigure}[t]{0.32\textwidth}
        \centering
        \includegraphics[width=0.9\linewidth]{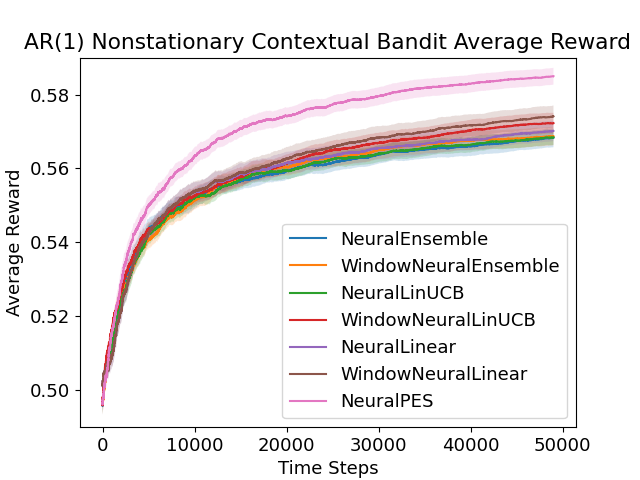}
        \caption{AR(1) Nonstationary Contextual Bandit Average Reward }
        \label{fig:toy_exp}
    \end{subfigure}
    \hfill
    \begin{subfigure}[t]{0.32\textwidth}
        \centering
        \includegraphics[width=0.9\linewidth]{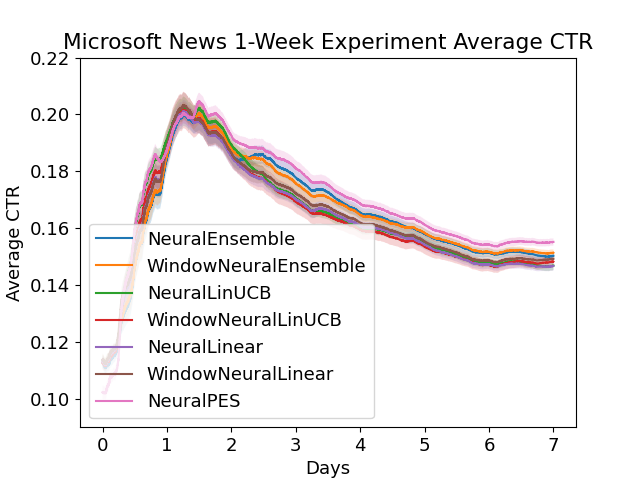}
        \caption{Microsoft News 1-Week Experiment Average CTR}
        \label{fig:mind_exp}
    \end{subfigure}
    \hfill
    \begin{subfigure}[t]{0.32\textwidth}
        \centering
        \includegraphics[width=0.9\linewidth]{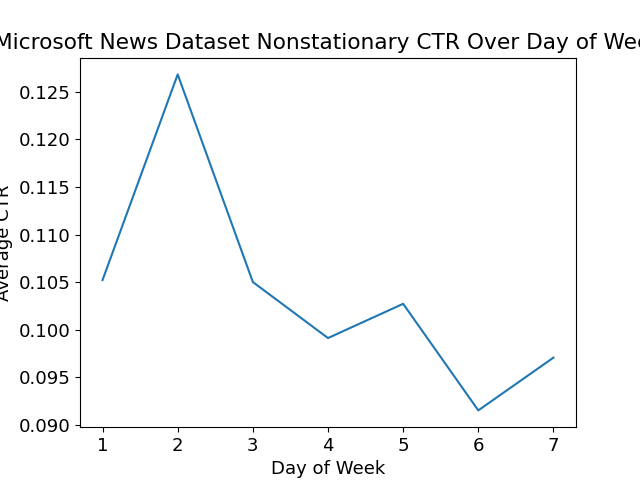}
        \caption{Microsoft News Day of Week Nonstationary CTR}
        \label{fig:mind_analysis}
    \end{subfigure}
    \hfill
    \begin{subfigure}[t]{0.32\textwidth}
        \centering
        \includegraphics[width=0.9\linewidth]{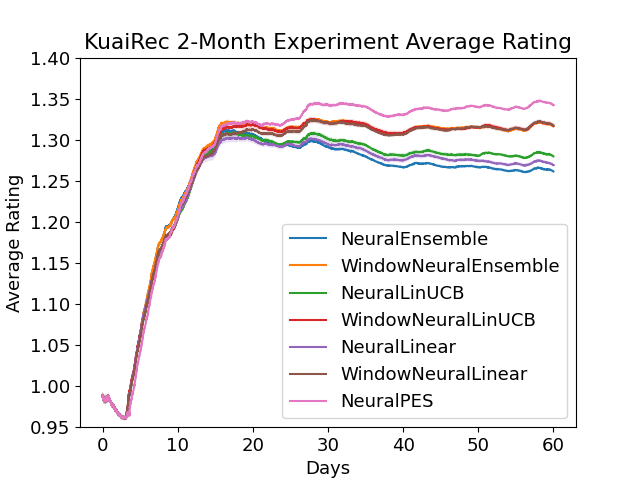}
        \caption{KuaiRec 2-Month Experiment Average Rating}
        \label{fig:kuai_exp}
    \end{subfigure}
    \hfill
    \begin{subfigure}[t]{0.32\textwidth}
        \centering
        \includegraphics[width=0.9\linewidth]{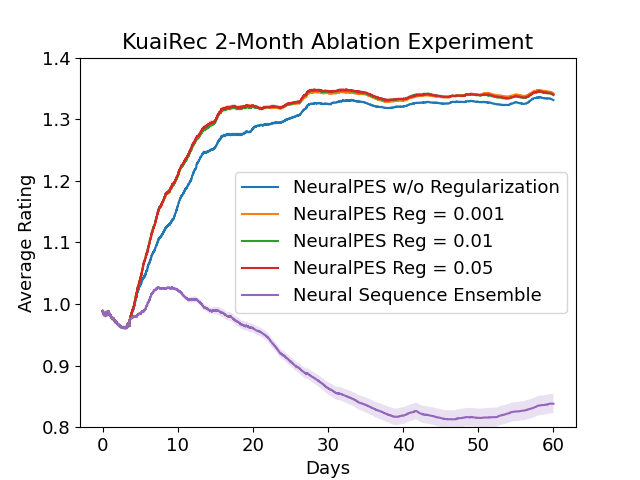}
        \caption{KuaiRec 2-Month Ablation Experiment}
        \label{fig:kuai_ablation}
    \end{subfigure}
    \caption{Empirical Results and Ablations}
\end{figure}

\subsection{Microsoft News Dataset Experiments}
We leverage the MIND dataset \citep{wu2020mind} to carry out the first real-world dataset experiment. MIND is collected from real user interactions with Microsoft News website and its public training and validation set covers the interactions from November 9 to November 15, 2019. Each row of the MIND dataset is presented as in Table \ref{tab:mind_example}. In this dataset, since every recommendation's groundtruth feedback is provided at a single timestamp, no counterfactual evaluation is needed. In this experiment, we feed the rows in the order of interaction timestamp to the agent for action selection to resemble the real-world nonstationarity in user preferences. The nonstationarity presented in this dataset is commonly observed as day of week patterns in real-world recommender systems. To visualize the nonstationarity in user behavior within a week, see Figure \ref{fig:mind_analysis} to see daily average click-through rate (CTR) in the dataset to see a week of day pattern in the dataset. 
\begin{table*}[h]
\scriptsize
  \centering
  \caption{MIND Dataset Illustration}
  \label{tab:mind_example}
  \begin{tabular}{ccccc}
    \toprule
    Impression ID & User ID & Time & User Interest History & News with Labels\\
    \midrule 
    91 & U397059 & 11/15/2019 10:22:32 AM & N106403 N71977 N97080 & N129416-0 N26703-1 N120089-1 N53018-0\\
  \bottomrule
\end{tabular}
\end{table*}

We sample 10,000 users from the dataset and asks candidate agents to select news recommendations sequentially according to the time order of the interactions that happened in the dataset. Hyperparameters of the agents are presented in Appendix \ref{sec:hyperparameter}. Features for each recommendation is derived by average pooling over the entity embeddings of each news recommendation provided by the dataset and features for each user as average pooling over features of their clicked articles. Both user and recommendation features are of size 100. The average CTR of news recommendations offered by candidate agents over 1 week is presented in Table \ref{tab:results}, and Figure \ref{fig:mind_exp}, where NeuralPES outperforms all baselines. Note that since we present interactions to users sequentially according to time order, the figure presents natural day of week seasonality from the dataset.

\subsection{KuaiRec Dataset Experiment}
While the MIND dataset offers a setup to empirically test agents' performance under day of week nonstationarity, the short duration of the dataset naturally limits the possibility of observing long-term agent behaviors under nonstationarity. In this experiment, we make slight modifications to the KuaiRec dataset \cite{gao2022kuairec} to offer a 2-month-long real-world experiment. Every row of KuaiRec offers a user ID, the timestamp, a video ID of a recommended video, and a rating derived from the user's watch duration. The dataset also offers daily features of each user and each video candidate, of dimensions 1588 and 283 respectively. In our transformed dataset, we grouped every 12 hours of recommendation to a user into a contextual bandit format where each row contains a user ID, the 12-hour window, set of videos alongside with their corresponding ratings, sorted by the 12-hour window start time. The agent's goal is to select the best recommendation to each user in each window in the order of occurrence in the real-world. Hyperparameters of the agents are presented in Appendix \ref{sec:hyperparameter}. The average rating of news recommendations offered by candidate agents over 2 months is presented in Table \ref{tab:results} and see Figure \ref{fig:kuai_exp} and we see NeuralPES outperforms all baselines.

\subsection{Ablation Studies}
\subsubsection{Regularization for Continual Learning}
To facilitate continual learning and avoid loss of plasticity, we leverage regularization trick introduced in Eq.\ref{eq:reg} to ensure the agent continues to learn while the environment changes. See Figure \ref{fig:kuai_ablation}. The algorithm with regularization consistently outperforms its version without regularization.

\subsubsection{Importance of Predictive Model}
We compare NeuralPES' performance against its version without the Predictive Model, Neural Sequence Ensenble, introduced in Section \ref{sec:seq_ensemble}. See Figure \ref{fig:kuai_ablation}. Without the Predictive Model, the agent crashes in its performance because in nonstationary environments, the environment changes are mostly unpredictable and the predictive model is responsible for determining whether a piece of information from the sequence model prediction lasts in the future. 

\section{Conclusion and Future Work}
There are a few lines of future work that can extend on top of this work. First of all, this work does not consider context and state evolution as a result of actions, as mentioned in \cite{zhu2023deep, xu2023optimizing, chen2022off}. As these state transition kernels can also be nonstationary, it calls for future extension of this work to address nonstationarities in reinforcement learning problems. Furthermore, to enhance the quality of future reward parameter predictions, attention mechanisms \citep{vaswani2017attention} can be potentially leveraged to further improve the performance of the models. 

In this paper, we introduced a novel non-stationary contextual bandit learning algorithm, NeuralPES, which is scalable with deep neural networks and is designed to seek enduring information. We theoretically demonstrated that the algorithm effectively prioritizes exploration for enduring information. Additionally, through empirical analysis on two extensive real-world datasets spanning one week and two months respectively, we illustrated that the algorithm adeptly adapts to pronounced non-stationarity and surpasses the performance of leading stationary neural contextual bandit learning algorithms, as well as their non-stationary counterparts. We aspire that the findings and the algorithm delineated in this paper will foster the adoption of NeuralPES in real-world systems.



\bibliographystyle{apalike}


\bibliography{references}

\appendix

\newpage

\section{Technical Proofs}
\subsection{Proof of Theorem~\ref{theorem:ps_regret}} 
We first present a general regret bound that applies to any agent. 

\begin{restatable}{theorem}{generalregret}
{\bf{(General Regret Bound)}}
In a linear contextual bandit, suppose $\{\theta_t\}_{t \in \mathbb{N}}$ is a Markov chain. For all policies $\pi$
and 
$T \in \mathbb{N}$, the regret is upper-bounded by 
$
    {\mathrm{Regret}}(T; \pi) \leq \sqrt{\sum_{t = 0}^{T-1} \Gamma_t^{\pi} \left[\I(\theta_2; \theta_1) + 
\sum_{t = 1}^{T-1} \I(\theta_{t+2}; \theta_{t+1} | \theta_{t})\right]}, 
$
where 
$\Gamma_t^{\pi} =  \frac{\E\left[R_{t+1, {*}} - R_{t+1, C_t, A_t^{\pi}} \right]^2}{\I\left(\theta_{t+2}; A_t^{\pi}, R_{t+1, C_t, A_t^{\pi}} | H_t^{\pi}\right)}$. 
\label{theorem:general_regret}
\end{restatable}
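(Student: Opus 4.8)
The plan is to prove Theorem~\ref{theorem:general_regret} by combining the information-ratio / Cauchy--Schwarz device of \citep{russo2016information} with a telescoping bound on the cumulative information the agent acquires about the two-step-ahead parameters $\theta_{t+2}$, in the spirit of \citep{liu2022stationary}. Throughout I would write $\Delta_t := \E[R_{t+1,*} - R_{t+1, C_t, A_t^{\pi}}]$ for the per-period regret and $g_t := \I(\theta_{t+2}; A_t^{\pi}, R_{t+1, C_t, A_t^{\pi}} \mid H_t^{\pi})$ for the per-period information gain, so that $\Gamma_t^{\pi} = \Delta_t^2 / g_t$ by definition.

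The easy half comes first. Since $\Delta_t = \sqrt{\Gamma_t^{\pi} g_t}$, Cauchy--Schwarz gives $\mathrm{Regret}(T;\pi) = \sum_{t=0}^{T-1}\Delta_t = \sum_{t=0}^{T-1}\sqrt{\Gamma_t^{\pi}}\sqrt{g_t} \le \sqrt{\big(\sum_{t=0}^{T-1}\Gamma_t^{\pi}\big)\big(\sum_{t=0}^{T-1} g_t\big)}$. Comparing with the claimed bound, it remains to show that the cumulative information gain obeys $\sum_{t=0}^{T-1} g_t \le \I(\theta_2;\theta_1) + \sum_{t=1}^{T-1}\I(\theta_{t+2};\theta_{t+1}\mid\theta_t)$, and this reduction is where all the real content lives.

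For the cumulative bound I would first rewrite each $g_t$ as a telescoping increment. Because the new context and available action set are exogenous to the reward parameters, they carry no information about $\theta_{t+2}$, so $g_t = \I(\theta_{t+2}; H_{t+1}^{\pi}) - \I(\theta_{t+2}; H_t^{\pi})$. Summing over $t$ and reorganizing the two families $\{\I(\theta_{t+2};H_{t+1}^{\pi})\}$ and $\{\I(\theta_{t+2};H_t^{\pi})\}$ so that, after splitting off the boundary terms, the remaining terms are paired by a \emph{common} history $H_t^{\pi}$ but consecutive targets $\theta_{t+1}$ and $\theta_{t+2}$, yields the exact identity $\sum_{t=0}^{T-1} g_t = \I(\theta_{T+1}; H_T^{\pi}) + \sum_{t=1}^{T-1} \I(\theta_{t+1}; H_t^{\pi}\mid \theta_{t+2})$. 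Here the pairing uses the chain-rule identity $\I(\theta_{t+1};H_t^{\pi}) - \I(\theta_{t+2};H_t^{\pi}) = \I(\theta_{t+1};H_t^{\pi}\mid\theta_{t+2})$, valid because $\theta_{t+2}\perp H_t^{\pi}\mid\theta_{t+1}$, together with $\I(\theta_2;H_0^{\pi})=0$. The point of this rewriting is that each surviving term now measures information that the \emph{past} history carries about a single parameter, which one can hope to control by the one-step dynamics.

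The crux is then a data-processing step driven by the Markov property, followed by a time-reversal. Because $H_t^{\pi}$ is a function of $\theta_{1:t}$ and exogenous randomness, it is conditionally independent of $(\theta_{t+1},\theta_{t+2})$ given $\theta_t$; hence, conditioning on $\theta_{t+2}$, the chain $H_t^{\pi} - \theta_t - \theta_{t+1}$ gives $\I(\theta_{t+1};H_t^{\pi}\mid\theta_{t+2}) \le \I(\theta_{t+1};\theta_t\mid\theta_{t+2})$, and similarly $\I(\theta_{T+1};H_T^{\pi}) \le \I(\theta_{T+1};\theta_T)$. Finally, using reversibility (and stationarity) of $\{\theta_t\}$ to reverse time on the triple $(\theta_t,\theta_{t+1},\theta_{t+2})$ converts $\I(\theta_{t+1};\theta_t\mid\theta_{t+2})$ into $\I(\theta_{t+2};\theta_{t+1}\mid\theta_t)$ and $\I(\theta_{T+1};\theta_T)$ into $\I(\theta_2;\theta_1)$, which is exactly the desired sum; this is the point at which the proof silently leans on the reversibility hypothesis that Theorem~\ref{theorem:ps_regret} makes explicit. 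I expect this final block---the telescoping identity and the reversal step that turns backward, history-conditioned information into the forward transition informations $\I(\theta_{t+2};\theta_{t+1}\mid\theta_t)$---to be the main obstacle. In particular one must carefully verify the conditional-independence relations justifying each data-processing inequality, and confirm that the transient term $\I(\theta_{T+1};H_T^{\pi})$ collapses to the single initial term $\I(\theta_2;\theta_1)$ rather than accumulating with $T$, which is precisely what makes the average regret bound sublinear.
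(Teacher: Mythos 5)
Your argument is logically sound and its first half (writing each per-period regret as $\sqrt{\Gamma_t^{\pi} g_t}$ and applying Cauchy--Schwarz, thereby reducing everything to bounding $\sum_t g_t$) is exactly the paper's. The second half, however, takes a genuinely different route whose cost is a loss of generality. You telescope the \emph{unconditional} quantities $\I(\theta_{t+2};H_t^{\pi})$, reach $\sum_{t}g_t=\I(\theta_{T+1};H_T^{\pi})+\sum_{t=1}^{T-1}\I(\theta_{t+1};H_t^{\pi}\mid\theta_{t+2})$, apply data processing, and then must reverse time to convert the backward informations $\I(\theta_{t+1};\theta_t\mid\theta_{t+2})$ and $\I(\theta_{T+1};\theta_T)$ into the forward ones in the statement. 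That reversal genuinely requires reversibility and stationarity: for a Markov chain one has $\I(\theta_{t+1};\theta_t\mid\theta_{t+2})=\I(\theta_t;\theta_{t+1})-\I(\theta_t;\theta_{t+2})$ while $\I(\theta_{t+2};\theta_{t+1}\mid\theta_t)=\I(\theta_{t+1};\theta_{t+2})-\I(\theta_t;\theta_{t+2})$, and these generally differ. So as written you prove Theorem~\ref{theorem:general_regret} only under the hypotheses of Theorem~\ref{theorem:ps_regret}, not under the bare Markov assumption in the statement (you flag this yourself). The paper avoids the issue by telescoping the \emph{conditional} quantities instead: it writes $g_t=\I(\theta_{t+2};\theta_{t+1}\mid H_t^{\pi})-\I(\theta_{t+2};\theta_{t+1}\mid H_{t+1}^{\pi})$ (using $\theta_{t+2}\perp H_{t+1}^{\pi}\mid\theta_{t+1}$), reindexes to pair terms with a common history, and then bounds $\I(\theta_{t+2};\theta_{t+1}\mid H_t^{\pi})-\I(\theta_{t+1};\theta_t\mid H_t^{\pi})\leq\I(\theta_{t+2};\theta_{t+1}\mid H_t^{\pi},\theta_t)=\I(\theta_{t+2};\theta_{t+1}\mid\theta_t)$ via monotonicity, the chain rule, and $(\theta_{t+1},\theta_{t+2})\perp H_t^{\pi}\mid\theta_t$; this produces the forward conditional informations directly with no time reversal and hence holds for an arbitrary Markov chain. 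Since the theorem is only invoked downstream under reversibility, your proof suffices for the paper's applications, but to establish the statement as written you should adopt the conditional telescoping. (Both arguments share the implicit assumption that $\{C_t\}$ and $\{\mathcal{A}_t\}$ carry no information about $\theta_{t+2}$; you at least make it explicit.)
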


\begin{proof}
For all policies $\pi$
and 
$T \in \mathbb{N}$, 
\begin{align*}
\mathrm{Regret}(T; \pi) = &\ \sum_{t = 0}^{T-1} \mathbb{E}[R_{t+1, *} - R_{t+1, C_t, A_t^{\pi}}] \\
= &\ \sum_{t = 0}^{T-1}\sqrt{ \Gamma_t^{\pi} \sum_{t = 0}^{T-1} \I\left(\theta_{t+2}; A_t^{\pi}, R_{t+1, C_t, A_t^{\pi}} | H_t\right)}\\
\leq &\ \sqrt{\sum_{t = 0}^{T-1} \Gamma_t^{\pi} \sum_{t = 0}^{T-1} \I\left(\theta_{t+2}; A_t^{\pi}, R_{t+1, C_t, A_t^{\pi}} | H_t\right)}, \numberthis
\label{eq:regret_proof_1}
\end{align*}
where the inequality follows from Cauchy-Schwartz. 

Next, observe that for all $t \in \mathbb{N}$, 
\begin{align*}
\label{eq:regret_proof_2}
\sum_{t = 0}^{T-1} \I\left(\theta_{t+2}; A_t^{\pi}, R_{t+1, C_t, A_t^{\pi}} | H_t^{\pi}\right)
= &\ \sum_{t = 0}^{T-1} \left[\I\left(\theta_{t+2}; \theta_{t+1} | H_t^{\pi}\right) - \I\left(\theta_{t+2}; \theta_{t+1} | H_{t+1}^{\pi}\right) \right]\\
= &\ \I(\theta_2; \theta_1) + \sum_{t = 1}^{T-1} \left[ \I\left(\theta_{t+2}; \theta_{t+1} | H_t^{\pi}\right) - \I\left(\theta_{t+1}; \theta_{t} | H_{t}^{\pi}\right)\right]\\
\leq &\ \I(\theta_2; \theta_1) + \sum_{t = 1}^{T-1} \left[\I\left(\theta_{t+2}, \theta_t; \theta_{t+1} | H_t^{\pi}\right) - \I\left(\theta_{t+1}; \theta_{t} | H_{t}^{\pi}\right)\right] \\
= &\ \I(\theta_2; \theta_1) + \sum_{t = 1}^{T-1} \I\left(\theta_{t+2}; \theta_{t+1} | H_t^{\pi}, \theta_t \right) \\
= &\ \I(\theta_2; \theta_1) + \sum_{t = 1}^{T-1} \I\left(\theta_{t+2}; \theta_{t+1} | \theta_t \right) \numberthis
\end{align*}
where the first equality follows from $\theta_{t+2} \perp H_{t+1}^{\pi} | \theta_{t+1}$. By \eqref{eq:regret_proof_1}
 and \eqref{eq:regret_proof_2}, 
we complete the proof. 
\end{proof}

To apply Theorem~\ref{theorem:general_regret} and derive a regret bound specifically for LinPS in linear contextual bandits, we establish the subsequent result, bounding $\Gamma_t^{\pi_{\mathrm{LinPS}}}$, which we will also refer to as $\Gamma_t$ for brevity.

\begin{restatable}{lemmma}{lemmair}
In a linear contextual bandit, suppose $\{\theta_t\}_{t \in \mathbb{N}}$ is a reversible Markov chain. 
For all $t \in \mathbb{N}$, the information ratio associated with LinPS satisfies 
$\Gamma_t \leq  \frac{d}{2},$ where the information ratio for any policy $\pi$ is defined as $\Gamma_t^{\pi} =  \frac{\E\left[R_{t+1, {*}} - R_{t+1, C_t, A_t^{\pi}} \right]^2}{\I\left(\theta_{t+2}; A_t^{\pi}, R_{t+1, C_t, A_t^{\pi}} | H_t^{\pi}\right)}$. 
\label{lemma:information_ratio}
\end{restatable}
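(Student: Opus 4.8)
The plan is to run the information-ratio argument of \citep{russo2016information} with the predictive target of \citep{liu2022stationary}. Writing the per-step regret and the information gain conditionally on the history, I would first reduce the claim to the pointwise bound
\[
\E\!\left[R_{t+1,*}-R_{t+1,C_t,A_t}\mid H_t\right]^2 \le \tfrac{d}{2}\,\I\!\left(\theta_{t+2};A_t,R_{t+1,C_t,A_t}\mid H_t\right),
\]
interpreted as an inequality between the squared conditional expected regret and the pointwise information gain, and then recover the stated (unconditional) ratio bound by taking expectations over $H_t$: concavity of the square root gives $\E[\,\cdot\,]\le\sqrt{\tfrac{d}{2}\,\I(\cdot\mid H_t)}$, and squaring yields $\Gamma_t\le d/2$. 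This localizes everything to a single Bayesian step under the posterior given $H_t$.

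Two structural facts drive the pointwise bound. First, LinPS draws $\hat\theta_{t+2}$ from the exact posterior $\Pr(\theta_{t+2}\in\cdot\mid H_t)$ using fresh randomness, so $A_t$ is conditionally independent of $(\theta_t,\theta_{t+1})$ given $H_t$ and is distributed exactly as the action that is greedy for a posterior sample of $\theta_{t+2}$; this is the Thompson-sampling \emph{matching} property, now for the two-step-ahead target. Second, I would use reversibility to align the regret comparator with this target: since $\theta_t\mid\theta_{t+1}\eqdist\theta_{t+2}\mid\theta_{t+1}$, the clairvoyant value $R_{t+1,*}$, which is optimal with respect to $\theta_t$, is dominated by the value of the action that is optimal for the two-step-ahead predictive mean $\E[\theta_{t+1}\mid H_t,\theta_{t+2}]$. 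Consequently the true regret is upper bounded by the \emph{predictive regret} of a Thompson-sampling agent whose target is $\theta_{t+2}$, which is precisely what LinPS realizes by the matching property.

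With the problem reduced to bounding this predictive regret, the remaining argument is the linear-bandit computation of \citep{russo2016information}. Using the conditional independence, I would write the predictive regret as $\sum_a \Pr(A_t=a\mid H_t)\,\phi(C_t,a)^\top\!\left(\E[\theta_{t+1}\mid H_t,\,a\text{ sampled-optimal}]-\E[\theta_{t+1}\mid H_t]\right)$, an inner product of the feature vectors against posterior mean-shift vectors of the target. I would then lower bound $\I(\theta_{t+2};A_t,R_{t+1,C_t,A_t}\mid H_t)$ by a sum of normalized squared mean shifts of $\theta_{t+2}$ (exactly in the Gaussian case, and through a sub-Gaussian mutual-information-to-variance inequality otherwise). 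Collecting the features into a Gram matrix of rank at most $d$ and applying the PSD trace inequality $\mathrm{tr}(M)^2\le d\,\|M\|_F^2$ — the step that replaces $|\mathcal{A}|$ by $d$ — produces the factor $d$ and hence $\Gamma_t\le d/2$.

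The main obstacle is the reversibility-based domination of the comparator. The three time indices play distinct roles: $\theta_t$ pins the regret benchmark, $\theta_{t+1}$ governs the observed reward, and $\theta_{t+2}$ is both what LinPS samples and what the information gain is measured against. Showing that the two-step-ahead predictive comparator dominates the $\theta_t$-clairvoyant comparator — so that true regret is controlled by predictive regret — is where reversibility is essential and where the argument is most delicate; propagating information about $\theta_{t+2}$ through the transition $\theta_{t+1}\to\theta_{t+2}$ in the denominator lower bound is the other place the Markov and reversibility structure must be used carefully. Once these are in place, the linear-algebraic steps are routine given the stationary predictive-sampling template.
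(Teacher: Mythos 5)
Your proposal follows essentially the same route as the paper's proof: reduce to a conditional (pointwise) bound given $H_t$ and recover the unconditional ratio via Jensen, use reversibility to replace the $\theta_t$-clairvoyant benchmark with the $\theta_{t+2}$-optimal action $A^*_{t,c}$, invoke the matching property of LinPS as Thompson sampling against the target $\theta_{t+2}$, and conclude with the $d/2$ linear-bandit information-ratio bound. The only difference is presentational --- the paper reduces the denominator to $\I(A^*_{t,c};A_t,R_{t+1,c,A_t}\mid H_t=h)$ by data processing and cites Proposition~2 of \citet{russo2016information} as a black box, whereas you propose to re-derive that proposition via the Gram-matrix trace inequality.
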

\begin{proof}
We use $A_t$ to denote $A_t^{\pi_{\mathrm{LinPS}}}$, $H_t$ to denote $H_t^{\pi_{\mathrm{LinPS}}}$. 

For all $t \in \mathbb{N}_0$, and $h = (c_0, a_0, r_1, ... , r_{t-1}, c) \in \mathcal{H}_t$, we have 
\begin{align*}
\mathbb{E}[R_{t+1, *} - R_{t+1, C_t, A_t^{ }} | H_t^{ } = h]
= &\ \mathbb{E}\left[\max_{a \in \mathcal{A}} \mathbb{E}[R_{t+1, c, a} | \theta_{t}] - R_{t+1, c, A_t^{ }} | H_t^{ } = h\right] \\
= &\ \mathbb{E}\left[\max_{a \in \mathcal{A}} \mathbb{E}[R_{t+1, c, a} | \theta_{t+2}] - R_{t+1, c, A_t^{ }} | H_t^{ } = h\right] \\
= &\ \mathbb{E}\left[R_{t+1, c, A_{t,c}^*} - R_{t+1, c, A_t^{ }} | H_t^{ } = h\right], \numberthis 
\label{eq:ir_proof_1}
\end{align*}
where the second equality follows from the reversibility of $\{\theta_t\}_{t \in \mathbb{N}}$, and $A_{t,c}^{*}$ is defined as $A_{t,c}^{*} = \argmax_{a \in \mathcal{A}} \mathbb{E}[R_{t+1, c, a} | \theta_{t+2}]$. 

In addition, for all $t \in \mathbb{N}_0$, and $h = (c_0, a_0, r_1, ... , r_{t-1}, c) \in \mathcal{H}_t$, we have 
\begin{align*}
\mathbb{I}\left(\theta_{t+2}; A_t^{ }, R_{t+1, C_t, A_t^{ }} | H_t^{ } = h\right) 
= &\ \mathbb{I}\left(\theta_{t+2}; A_t^{ }, R_{t+1, c, A_t^{ }} | H_t^{ } = h\right) \\
\geq &\ \mathbb{I}\left(A_{t,c}^{*}; A_t^{ }, R_{t+1, c, A_t^{ }} | H_t^{ } = h\right), \numberthis 
\label{eq:ir_proof_2}
\end{align*}
where the inequality follows from the data-processing inequality. 

Note that $\mathbb{E}[R_{t+1, c, a} | H_t^{ } = h, \theta_{t+1}] = \mathbb{E}[R_{t+1, c, a} | \theta_{t+1}]  = \phi(c, a)^{\top} \theta_{t+1}$. By Proposition 2 of \citep{russo2016information}, we have 
\begin{align*}
\mathbb{E}\left[R_{t+1, c, A_{t,c}^*} - R_{t+1, c, A_t^{ }} | H_t^{ } = h\right]^2 \leq \frac{d}{2} \mathbb{I}\left(A_{t,c}^{*}; A_t^{ }, R_{t+1, c, A_t^{ }} | H_t^{ } = h\right).
\end{align*}
This, together with \eqref{eq:ir_proof_1} and \eqref{eq:ir_proof_2}, implies that 
\begin{align}
\mathbb{E}[R_{t+1, *} - R_{t+1, C_t, A_t^{ }} | H_t^{ } = h]^2 \leq \frac{d}{2} \mathbb{I}\left(\theta_{t+2}; A_t^{ }, R_{t+1, C_t, A_t^{ }} | H_t^{ } = h\right).
\label{eq:ir_proof_3}
\end{align}
Therefore, for all $t \in \mathbb{N}$, 
\begin{align*}
\mathbb{E}\left[R_{t+1, *} - R_{t+1, C_t, A_t^{}}\right]^2 
= &\ \mathbb{E}\left[\mathbb{E}\left[R_{t+1, *} - R_{t+1, C_t, A_t^{}} | H_t^{}\right]\right]^2 \\
\leq &\ \mathbb{E}\left[\mathbb{E}\left[R_{t+1, *} - R_{t+1, C_t, A_t^{}} | H_t^{}\right]^2\right]\\
= &\ \frac{d}{2} \mathbb{I}\left(\theta_{t+2}; A_t^{}, R_{t+1, C_t, A_t^{}} | H_t^{}\right), 
\end{align*}
where the inequality follows from Jensen's inequality, and the last equality follows from \eqref{eq:ir_proof_3}. 
\end{proof}

Then Theorem~\ref{theorem:ps_regret} follows directly from Theorem~\ref{theorem:general_regret} and Lemma~\ref{lemma:information_ratio}. 
\psregretbound*
\begin{proof}
For all $T \in \mathbb{N}$, the regret of LinPS is upper-bounded by 
\begin{align*}
    {\mathrm{Regret}}(T; \pi) 
\leq &\ \sqrt{\sum_{t = 0}^{T-1} \Gamma_t \left[\I(\theta_2; \theta_1) + 
\sum_{t = 1}^{T-1} \I(\theta_{t+2}; \theta_{t+1} | \theta_{t})\right]} \\
= &\ \sqrt{\frac{d}{2} T \left[\I(\theta_2; \theta_1) + 
\sum_{t = 1}^{T-1} \I(\theta_{t+2}; \theta_{t+1} | \theta_{t})\right]} \\
= &\ \sqrt{\frac{d}{2} T \left[\I(\theta_2; \theta_1) + 
(T-1) \I(\theta_{3}; \theta_{2} | \theta_{1})\right]},
\end{align*}
where the first inequality follows from Theorem~\ref{theorem:general_regret}, the first equality follows from Lemma~\ref{lemma:information_ratio}, and the last equality follows from stationarity. 

\end{proof}

\subsection{Proof of Corollary~\ref{corollary:ps_regret}}

Corollary~\ref{corollary:ps_regret} follows directly from Theorem~\ref{theorem:ps_regret} and Lemma~8 of \citep{liu2023nonstationary}. 

\subsection{Proof of Corollary~\ref{corollary:ar1}}
\ar*
\begin{proof}
We use $\mathbf{h}$ to denote differential entropy. 
If $\gamma_i < 1$ for all $i \in [d]$, then 
\begin{align*}
\mathbb{I}(\theta_{3}; \theta_{2} | \theta_{1})
= &\ 
\sum_{i = 1}^d \mathbb{I}(\theta_{3, i}; \theta_{2, i} | \theta_{1, i})\\
= 
&\
\sum_{i = 1}^d 
\left[\mathbf{h}(\theta_{3, i} | \theta_{1, i}) - \mathbf{h}(\theta_{3, i} | \theta_{2, i}, \theta_{1, i})\right] \\
= &\ 
\sum_{i = 1}^d 
\left[\mathbf{h}(\theta_{3, i} | \theta_{1, i}) - \mathbf{h}(\theta_{3, i} | \theta_{2, i}) \right] \\
= &\ 
\sum_{i = 1}^d 
\left[\frac{1}{2} \log\left(2 \pi e (\gamma_i^2 + 1) (1- \gamma_i^2)\right) - \frac{1}{2} \log \left(2 \pi e (1-\gamma_i^2)\right) \right]\\
= &\ \sum_{i = 1}^d  \frac{1}{2} \log\left(\gamma_i^2 + 1\right).
\end{align*}
In addition, if $\gamma_i < 1$ for all $i \in [d]$, then 
\begin{align*}
\mathbb{I}(\theta_2; \theta_1) 
= \sum_{i = 1}^d \mathbb{I}(\theta_{2, i}; \theta_{1, i}) 
= \sum_{i = 1}^d \left[\mathbf{h}(\theta_{2, i}) - \mathbf{h}(\theta_{2, i} | \theta_{1, i}) \right]
= \sum_{i = 1}^d 
\log\left(\frac{1}{1 - \gamma_i^2}\right).
\end{align*}
Applying Theorem~\ref{theorem:ps_regret}, we complete the proof. 
\end{proof}


\section{Implementation}

\subsubsection{Extension to Improve Scalability} \label{sec:scale}
Instead of generating $w_{1:M}$ ensemble every time step, $w_{1:M}$ can be generated every $K$ steps to further improve scalability of the method. 
In this case, the reward model $f(w_{m, \lfloor \frac{t}{K} \rfloor}; b(\psi;c, a))$ represent a posterior sample of the average reward of context-action pair $c, a$ in the current $K$-step window. The sequence model, $f^{\text{seq}}(w_{m, j}^{\text{seq}}; w_{m, j - L + 1 : j})$ predicts $w_{m, j + 1}$. Leveraging the sequence model for two step rollouts to obtain $\hat{w}_{m, j+1}$ and $\hat{w}_{m, j+2}$, the predictive model then predicts the average reward of context-action pair $c, a$ in the current $K$-step window conditioned on future reward by computing $f^{\text{pred}}(w_{m, \lfloor\frac{t}{K}\rfloor}^{\text{pred}};\hat{w}_{m, \lfloor\frac{t}{K}\rfloor+2} \odot b(\psi;c,a))$. The agent samples $m \sim \text{unif}(\{1, \dots, M\})$ and takes action with 
$$
A_t \in \argmax_{a \in \mathcal{A}} f^{\text{pred}}(w_{m, \lfloor\frac{t}{K}\rfloor}^{\text{pred}};\hat{w}_{m, \lfloor\frac{t}{K}\rfloor+2} \odot b(\psi;c,a))
$$

\subsubsection{Experiment Hyperparameters} \label{sec:hyperparameter}
NeuralPES's training intervals for AR(1), Microsoft News and Kuai are set to 100, 200 and 1200 respectively. All NeuralPES agents use a lookback reward parameter window of 10. 

AR(1) Contextual Logistic Bandit Experiment Hyperparameters - Table \ref{tab:ar_param}
\begin{table}
    \scriptsize
    \centering
    \begin{tabular}{ccccccc}
    \toprule
    Algorithm & NN Arch & Sliding Window & LR & Sequence Model & Reg Coeff & Pred Model Arch\\
    \midrule
    Neural Ensemble & -50 - 25 - 10- & 50,000 & 0.0001 & N/A & N/A & N/A \\
    Window Neural Ensemble &-50 - 25 - 10-& 10,000 & 0.0001 & N/A & N/A & N/A\\ 
    Neural LinUCB & -50 - 25 - 10- & 50,000 & 0.0001 & N/A & N/A & N/A\\
    Window Neural LinUCB & -50 - 25 - 10- & 10,000 & 0.0001 & N/A & N/A & N/A\\ 
    Neural Linear & -50 - 25 - 10- & 50,000 & 0.0001 & N/A & N/A & N/A\\
    Window Neural Linear & -50 - 25 - 10- & 10,000 & 0.0001 & N/A & N/A & N/A\\
    NeuralPES & -50 - 25- & 10,000 & 0.0001 & GRU 1-layer, 25 hidden & 0.05 & -10- \\ 
    \bottomrule
    \end{tabular}
    \caption{AR(1) Hyperparameter}
    \label{tab:ar_param}
\end{table}

Microsoft News 1-Week Experiment Hyperparameters - Table \ref{tab:mind_param}

\begin{table}
    \scriptsize
    \centering
    \begin{tabular}{ccccccc}
    \toprule
    Algorithm & NN Arch & Sliding Window & LR & Sequence Model & Reg Coeff & Pred Model Arch\\
    \midrule
    Neural Ensemble & -256 - 128- & 66,000 & 0.0001 & N/A & N/A & N/A \\
    Window Neural Ensemble &-256 - 128-& 20,000 & 0.0001 & N/A & N/A & N/A\\ 
    Neural LinUCB & -256 - 128- & 66,000 & 0.0001 & N/A & N/A & N/A\\
    Window Neural LinUCB & -256 - 128- & 20,000 & 0.0001 & N/A & N/A & N/A\\ 
    Neural Linear & -256 - 128- & 66,000 & 0.0001 & N/A & N/A & N/A\\
    Window Neural Linear & -256 - 128- & 20,000 & 0.0001 & N/A & N/A & N/A\\
    NeuralPES & -256 - 128- & 20,000 & 0.0001 & GRU 1-layer, 128 hidden & 0.05 & -10- \\ 
    \bottomrule
    \end{tabular}
    \caption{Microsoft News Hyperparameter}
    \label{tab:mind_param}
\end{table}

KuaiRec 2-Month Experiment Hyperparameters - Table \ref{tab:kuai_param}

\begin{table}
    \scriptsize
    \centering
    \begin{tabular}{ccccccc}
    \toprule
    Algorithm & NN Arch & Sliding Window & LR & Sequence Model & Reg Coeff & Pred Model Arch\\
    \midrule
    Neural Ensemble & -512 - 128- & 140,000 & 0.0001 & N/A & N/A & N/A \\
    Window Neural Ensemble &-512 - 128-& 20,000 & 0.0001 & N/A & N/A & N/A\\ 
    Neural LinUCB & -512 - 128- & 140,000 & 0.0001 & N/A & N/A & N/A\\
    Window Neural LinUCB & -512 - 128- & 20,000 & 0.0001 & N/A & N/A & N/A\\ 
    Neural Linear & -512 - 128- & 140,000 & 0.0001 & N/A & N/A & N/A\\
    Window Neural Linear & -512 - 128- & 20,000 & 0.0001 & N/A & N/A & N/A\\
    NeuralPES & -512 - 128- & 20,000 & 0.0001 & GRU 1-layer, 128 hidden & 0.001 & -10- \\ 
    \bottomrule
    \end{tabular}
    \caption{KuaiRec Hyperparameter}
    \label{tab:kuai_param}
\end{table}
\end{document}